\documentclass[journal]{IEEEtran}

\ifCLASSINFOpdf

\else

\fi

\usepackage{amsmath,amssymb,amsfonts}
\usepackage{graphicx}
\usepackage{comment}

\newtheorem{remark}{Remark}
\newtheorem{assumption}{Assumption}
\newtheorem{definition}{Definition}

\newtheorem{theorem}{Theorem}

\begin{document}

© 2026 IEEE. Personal use of this material is permitted.
Permission from IEEE must be obtained for all other uses,
including reprinting/republishing this material for advertising
or promotional purposes, collecting new collected works
for resale or redistribution to servers or lists, or reuse of
any copyrighted component of this work in other works.
This work has been submitted to the IEEE for possible
publication. Copyright may be transferred without notice,
after which this version may no longer be accessible.

\title{Flexibility-Aware Robotic Manipulation: \\ Deformation Estimation, Kinematics, and Control via Distributed Inertial Measurement}

\author{Amir Hossein~Barjini, Sadeq~Yaqubi, Pauli~Mustalahti, and~Jouni~Mattila     
\thanks{This work was supported by the Research Council of Finland under the Project "Nonlinear PDE-model-based control of flexible manipulators" under Grant 355664.}
\thanks{Authors are with the Department of Engineering and Natural Sciences, Tampere University, Finland.}
}

\maketitle

\begin{abstract}
Flexible robot links challenge the rigid-body assumption, making joint measurements alone insufficient for accurate manipulation. This paper introduces a flexibility-aware manipulation framework that connects structural dynamics directly to sensing, kinematics, and control. Instead of prescribing a dense inertial measurement unit (IMU) network, the dominant vibration modes of each flexible link are first identified from response-convergence and excitation-bandwidth criteria. Under a full-rank modal observability condition, $m$ retained modes require only $m$ independent slope measurements and one reference IMU, defining the minimum observable IMU configuration. The resulting measurements are used to reconstruct the continuous link deformation, which is then incorporated into flexibility-aware forward kinematics and a deformation-informed partial differential equation-based controller. Experiments on hydraulic single- and double-link manipulators evaluate the framework. On the long-reach double-link platform, the proposed kinematics achieves end-effector root mean square errors of $0.0107$~m, $0.0194$~m, and $0.0102$~rad while using fewer sensors than the piecewise-rigid method. Closed-loop reaching experiments further demonstrate real-time integration of IMU-estimated joint states and reconstructed deformation into the proposed controller. The results show that dominant modal information can determine sensing requirements and enable real-time flexibility-aware robotic manipulation.
\end{abstract}

\begingroup
\renewcommand{\abstractname}{Note to Practitioners}
\begin{abstract}
Long-reach manipulators used in drilling, inspection, construction, and material handling can bend enough that joint measurements no longer predict the tool pose accurately. External tracking systems are often impractical in factories or work sites because they require line of sight and additional infrastructure. The proposed method uses compact inertial sensors mounted along the robot links. A practitioner specifies the link dimensions, material properties, payload, and expected motion bandwidth; the procedure then determines how many deformation components must be represented and checks whether the selected sensor locations provide sufficient information. The resulting measurements can be used in real time to estimate link bending, improve end-effector pose estimation, and provide deformation information to the controller. Experiments on hydraulic manipulators indicate that this approach can reduce the required instrumentation relative to a dense piecewise-rigid layout while supporting accurate pose estimation and closed-loop operation. The current implementation assumes planar motion and small deformation, and the modal analysis should be updated when the payload or operating bandwidth changes substantially.
\end{abstract}
\endgroup

\begin{IEEEkeywords}
Flexible manipulators, deformation estimation, inertial measurement units, flexibility-aware, PDE-based control.
\end{IEEEkeywords}

\IEEEpeerreviewmaketitle

\vspace{-10mm}
\section{Introduction}
\label{sec:introduction}

Robotic manipulators are increasingly required to perform dexterous and autonomous tasks in dynamic environments~\cite{billard2019trends}, where accurate state information underpins trajectory planning~\cite{ding2024trajectory}, task-space control~\cite{egli2022general,taheri2024end}, and controlled physical interaction~\cite{gold2022model,suomalainen2022survey}. 
However, as manipulator links become longer and payload demands increase, the conventional rigid-body assumption can become inadequate, since elastic deformation and vibration introduce discrepancies between rigid-model predictions and the actual robot motion, motivating the consideration of flexibility in heavy-duty manipulators~\cite{ding2025high,hubert2025vibration}. Advanced flexible robotic manipulation therefore requires not only joint measurements, but also real-time estimates of link deformation and their explicit incorporation into kinematics and control. The following subsection reviews the state of the art in these areas and identifies the remaining research gaps.

\vspace{-5mm}
\subsection{Related Works and Research Gap}
\label{subsec:literature_review}

Deformation estimation in flexible robots has been investigated through a range of sensing technologies and estimation approaches.
Optical measurement systems, including camera-based methods and laser trackers, provide noncontact measurements for pose estimation, shape reconstruction, and deformation assessment in rigid, soft, continuum, and flexible manipulators~\cite{lu2023image,barjini2025deep}.
Embedded optical-fiber sensors infer local curvature from distributed strain measurements, enabling three-dimensional shape sensing~\cite{paloschi20203d,sefati2019fbg}. 
In parallel, model-based state estimators fuse pose or strain measurements with continuum models to reconstruct the configuration of continuum robots~\cite{lilge2022continuum,anderson2017continuum}, while partial differential equation (PDE) observers estimate distributed flexible-link states from boundary measurements~\cite{yang2015observer,jiang2017robust}. Collectively, these approaches establish a broad foundation for deformation perception, with different requirements for external infrastructure, sensor integration, and model knowledge.

\begin{figure*}[htp]
    \centering
    \includegraphics[width=0.85\linewidth]{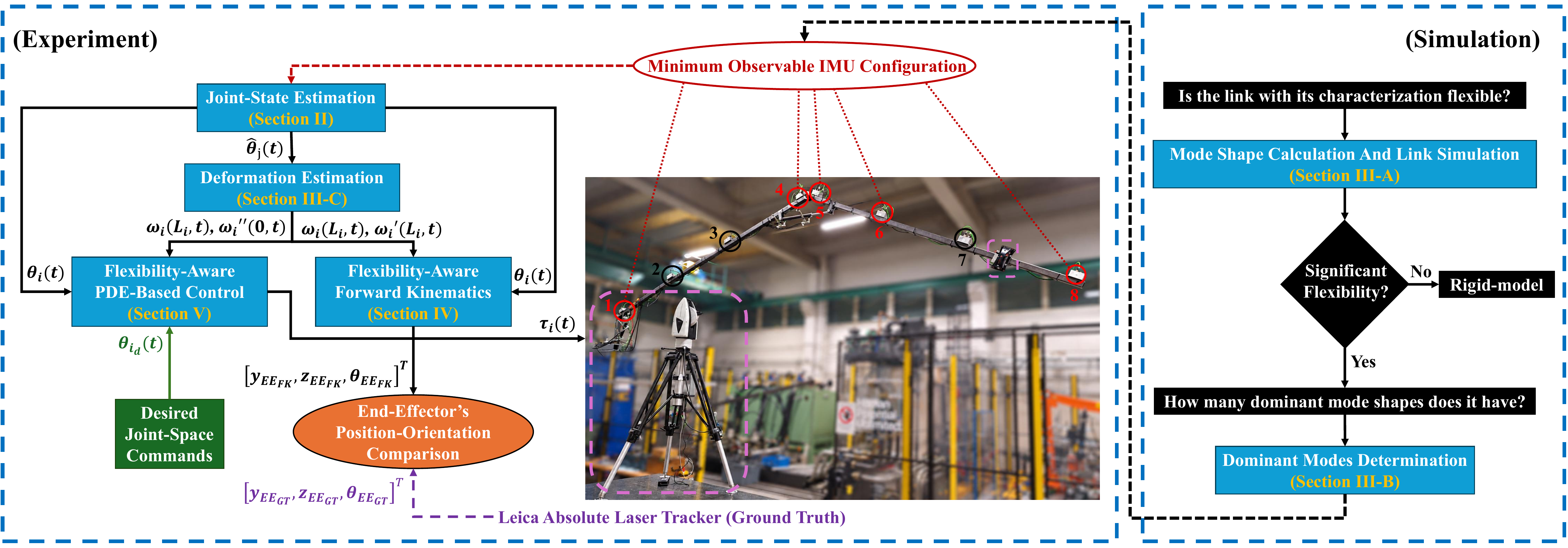}
    \caption{Overview of the proposed flexibility-aware manipulation framework, integrating mode selection, IMU-based deformation reconstruction, flexibility-aware kinematics, and control.}
    \label{fig:methodology}
\end{figure*}

Inertial measurement units (IMUs) offer a compact and cost-effective solution for robot perception, operating without external infrastructure, direct visual access, or specific lighting conditions, which makes them well suited to industrial applications~\cite{ahmad2013reviews}.
In rigid robotic manipulators, accelerometer and gyroscope measurements have enabled reliable joint-angle estimation~\cite{vihonen2017joint,cantelli2015joint}, while IMU-assisted methods have also supported collision monitoring and fault-tolerant state estimation~\cite{birjandi2020observer,baradaran2024fault}.
Observability-based studies have further established how the number and placement of inertial sensors influence kinematic-state estimation~\cite{fennel2023observability}.
Nevertheless, the use of distributed IMUs for continuous flexible-link deformation reconstruction, together with the relationship between the dominant modal content and the required sensor configuration, remains comparatively underexplored.

Flexible-link modeling consists of infinite-dimensional and finite-dimensional formulations. 
Infinite-dimensional PDE models preserve the continuous distributed dynamics and provide a high-fidelity description of flexible robot motion~\cite{mattioni2020infinite,yaqubi2026screw}. 
For real-time estimation and control, however, finite-dimensional representations are often preferred to reduce computational complexity~\cite{ding2025high}. Among these, finite-element methods discretize the link into elastic elements~\cite{piras2005dynamic,kermanian2019dynamic}, flexible-multibody formulations capture coupled rigid--flexible motion within complex mechanisms~\cite{scaglioni2017closed,khadim2025real}, and assumed-mode methods (AMMs) approximate deformation using a finite set of spatial basis functions~\cite{barjini2024deep,ding2025high}. 
The retained modal order is therefore crucial, requiring mode selection to balance model fidelity and computational efficiency while accounting for structural properties and payload variation~\cite{mishra2022determination,heidari2012truncation}. However, the relationship between the selected modes and the required sensor network remains largely unresolved.

Kinematics provides the bridge between deformation perception and task-space manipulation.
In continuum and soft robots, constant- and piecewise-constant-curvature models provide compact forward and inverse kinematic mappings~\cite{neppalli2009closed,webster2010design}, while geometrically exact rod formulations and geometric-constraint-based models have been used to describe robot shape and deformation under external loading~\cite{rucker2010geometrically,rucker2011statics,yang2020geometric}. Learning-based methods have also been investigated for inverse kinematics under highly nonlinear deformation~\cite{fang2022efficient}.
For flexible manipulators, measured or predicted deformation has been incorporated into kinematic transformations to correct end-effector position and orientation~\cite{gu2003flexible,kuccuk2004robot}. More recently, distributed-IMU approaches have represented flexible links as piecewise-rigid chains for real-time pose estimation~\cite{barjini2025ai} and subsystem-based control~\cite{tahamipour2025computationally}. These studies demonstrate the value of flexibility-aware kinematics; however, a sensor-efficient framework for serial flexible manipulators that reconstructs continuous link deformation using dominant mode shapes and explicitly incorporates it into the kinematics remains largely unexplored.

Accurate control of flexible manipulators requires accounting for elastic dynamics, since precise joint regulation alone cannot suppress link deformation and vibration or their resulting effect on the end-effector pose. Accordingly, existing strategies include visual servoing, neural-network and model-predictive control, reinforcement-learning-based control, and PDE-based control. 
A vision-based method combines nonlinear state observers with visual servoing to regulate end-effector motion while suppressing structural vibration~\cite{li2023nonlinear}. 
Neural-network and model-predictive controllers have also been experimentally investigated for motion regulation and vibration suppression~\cite{gao2018neural,silva2020implementable}. 
Actor--critic reinforcement learning has been employed for trajectory tracking and vibration suppression~\cite{pradhan2012real,he2020reinforcement,viswanadhapalli2024deep}. 
PDE-based controllers, in contrast, directly incorporate the distributed dynamics of flexible links into the control law~\cite{zhao2019boundary,yaqubi2023semi}, thereby avoiding modal truncation and enabling rigorous stability analysis. More recently, a nonlinear PDE controller has been coupled with a learning-based planner that generates vibration-minimizing reference trajectories~\cite{barjini2025deep}.
Despite these developments, deformation sensing and control have largely been treated as separate problems, although estimation latency and reconstruction errors can degrade closed-loop performance. In particular, the real-time integration of reconstructed link deformation into closed-loop control remains insufficiently investigated for heavy-duty serial flexible manipulators.

The literature reviewed above addresses individual elements of deformation sensing, flexibility-aware kinematics, and control, but does not provide the complete mode-informed sensing-to-control framework considered here. To the best of the authors' knowledge, no existing framework links dominant-mode selection to a minimum observable IMU configuration, uses the resulting measurements to reconstruct continuous link deformation, and incorporates the reconstructed deformation into both flexibility-aware kinematics and PDE-based closed-loop control. The overall workflow of the proposed framework, from flexible-link characterization and dominant-mode selection to reduced-order IMU-based deformation reconstruction, flexibility-aware kinematics, and closed-loop control, is summarized in Fig.~\ref{fig:methodology}.

\subsection{Contributions and Structure of the Paper}
\label{subsec:contribution}

The main contribution of this paper is a unified flexibility-aware robotic manipulation framework that integrates sensor-efficient deformation estimation, flexibility-aware kinematics, and control. The specific contributions are summarized as follows:
\begin{itemize}
\item A mode-selection procedure is developed to identify the dominant mode shapes of each flexible link based on response convergence and the system excitation bandwidth.

\item A sensor-efficient deformation estimation method is introduced, in which the retained modal coordinates are reconstructed using the identified dominant mode shapes. Under the modal observability condition, reconstructing $m$ dominant modes requires $m$ independent slope measurements and one reference IMU, defining the minimum observable IMU configuration.

\item The reconstructed link deformation is explicitly incorporated into flexibility-aware forward kinematics and a PDE-based joint-space controller, demonstrating its use in both end-effector pose estimation and real-time closed-loop control.

\item The proposed methods are validated on single-link flexible manipulator~(SLFM) and double-link flexible manipulator~(DLFM) using laser-tracker ground truth and are compared with conventional rigid and piecewise-rigid approaches.
\end{itemize}

The remainder of this paper is organized as follows. Section~\ref{sec:preliminaries} presents the inertial measurement methods for joint-state estimation. Section~\ref{sec:deformation} describes the flexible-link model, mode selection, and IMU-based deformation reconstruction. Sections~\ref{sec:kinematics} and~\ref{sec:control} develop the flexibility-aware kinematics and controller, respectively. Section~\ref{sec:results} reports the numerical and experimental results, and Section~\ref{sec:conclusion} concludes the paper.

\section{Inertial Measurement Preliminaries}
\label{sec:preliminaries}

The proposed framework relies on distributed IMUs to provide the joint-state information required for deformation reconstruction, flexibility-aware kinematics, and closed-loop control. This section briefly summarizes the estimation procedure adopted from~\cite{vihonen2017joint}, in which accelerometer and gyroscope measurements are combined to estimate the relative joint angle, while gyroscope measurements provide the relative joint angular velocity. The resulting signals are then combined through complementary filtering to reduce measurement noise and gyroscope drift.

\subsection{IMU-Based Joint-State Estimation}
\label{subsec:joint_state}

Consider the $(i-1)^\text{th}$ and $i^\text{th}$ links shown in Fig.~\ref{fig:links}, with joint frames $S_{i-1}$ and $S_i$ and corresponding IMU frames $P_{i-1}$ and $P_i$. The relative orientation between the adjacent links is described by the revolute-joint angle $\theta_i$.

\begin{figure}[h]
    \centering
    \includegraphics[width=0.65\linewidth]{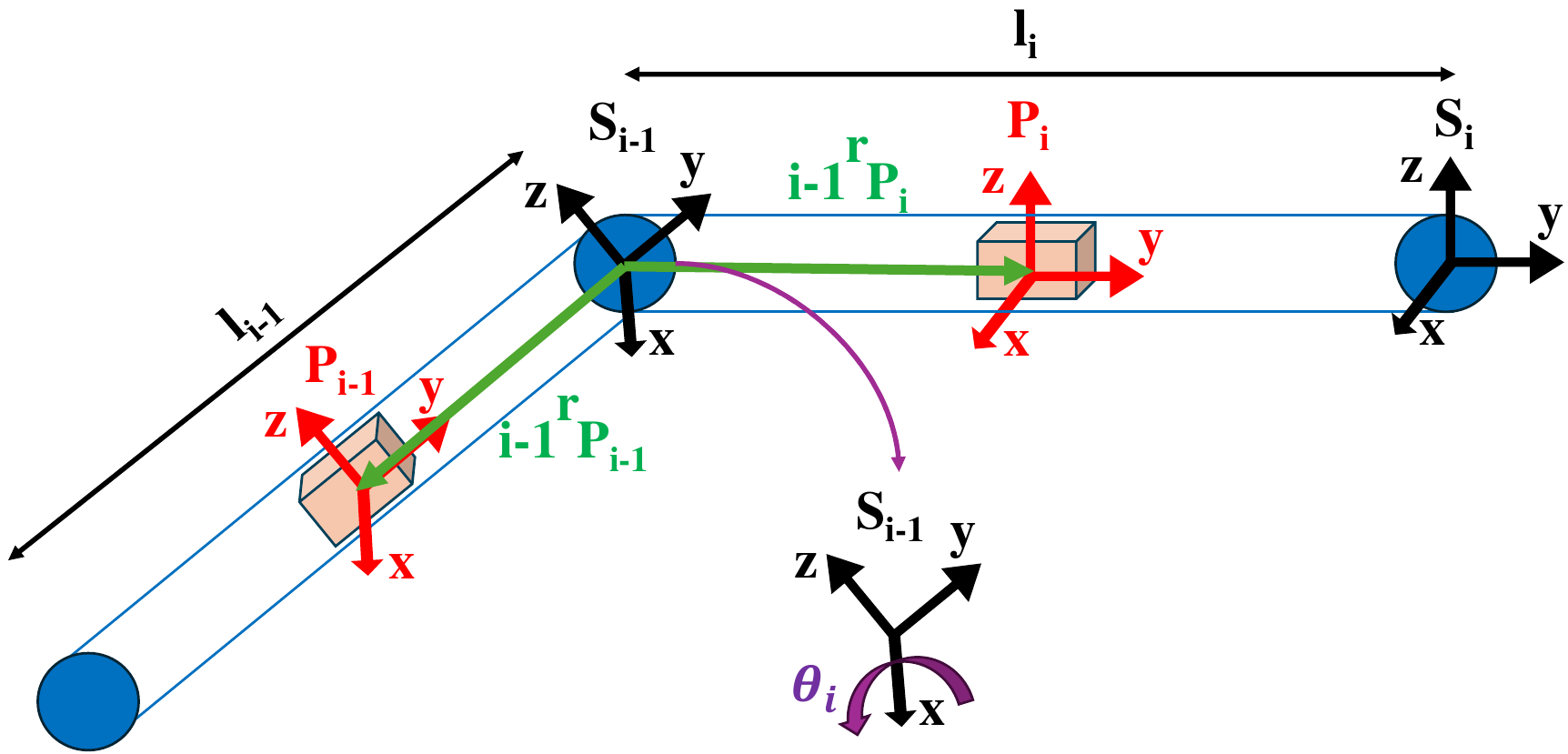}
    \caption{Adjacent-link frames and IMU configuration used for joint-state estimation.}
    \label{fig:links}
\end{figure}

For a revolute joint whose axis is aligned with the $x$-axis, the relative rotation between the two links is given by
\begin{equation}
{}^{i-1}\mathbf{R}_{i} =
\begin{bmatrix}
1 & 0 & 0 \\
0 & \cos\theta_i & -\sin\theta_i \\
0 & \sin\theta_i & \cos\theta_i
\end{bmatrix}.
\label{eq:Rotation_Matrix}
\end{equation}

Following the formulation in~\cite{vihonen2017joint}, the relative joint angle $\theta_i$ is estimated by combining the accelerometer measurements of adjacent IMUs with gyroscope information accounting for their relative motion
\begin{equation}
\theta_i =
\operatorname{atan2}\left(
\mathcal{A}_i,\mathcal{B}_i
\right),
\label{eq:theta}
\end{equation}
where
\begin{equation}
\begin{aligned}
\mathcal{A}_i
&=
{}^{i-1}\ddot{\mathbf{r}}^{z}_{P_{i-1}}
\left(
{}^{i}\ddot{\mathbf{r}}^{y}_{P_i}
-
{}^{i}_{P_{i-1}}\ddot{\mathbf{r}}^{y}_{P_i}
\right)
\\
&\quad-
{}^{i-1}\ddot{\mathbf{r}}^{y}_{P_{i-1}}
\left(
{}^{i}\ddot{\mathbf{r}}^{z}_{P_i}
-
{}^{i}_{P_{i-1}}\ddot{\mathbf{r}}^{z}_{P_i}
\right),
\\[1mm]
\mathcal{B}_i
&=
{}^{i-1}\ddot{\mathbf{r}}^{y}_{P_{i-1}}
\left(
{}^{i}\ddot{\mathbf{r}}^{y}_{P_i}
-
{}^{i}_{P_{i-1}}\ddot{\mathbf{r}}^{y}_{P_i}
\right)
\\
&\quad+
{}^{i-1}\ddot{\mathbf{r}}^{z}_{P_{i-1}}
\left(
{}^{i}\ddot{\mathbf{r}}^{z}_{P_i}
-
{}^{i}_{P_{i-1}}\ddot{\mathbf{r}}^{z}_{P_i}
\right).
\end{aligned}
\label{eq:theta_terms}
\end{equation}
Here, ${}^{i-1}\ddot{\mathbf{r}}_{P_{i-1}}$ and ${}^{i}\ddot{\mathbf{r}}_{P_i}$ are obtained from the corresponding accelerometer measurements, whereas ${}^{i}_{P_{i-1}}\ddot{\mathbf{r}}_{P_i}$ represents the relative acceleration between the two IMU locations and is determined from the gyroscope measurements and known sensor positions~\cite{vihonen2017joint}. Thus, the joint-angle estimate incorporates information from both accelerometers and gyroscopes.

The joint angular velocity is obtained directly from the relative angular velocity of the adjacent links. When expressed in frame $S_i$, this relative angular velocity is given by
\begin{equation}
{}^{i}_{i-1}\boldsymbol{\Omega}_{i}
=
{}^{i}\boldsymbol{\Omega}_{i}
-
{}^{i-1}\mathbf{R}_{i}^{T}
{}^{i-1}\boldsymbol{\Omega}_{i-1},
\label{eq:i-1_omega_i}
\end{equation}
where ${}^{i}\boldsymbol{\Omega}_{i}$ and
${}^{i-1}\boldsymbol{\Omega}_{i-1}$ are measured by the corresponding gyroscopes. Since the joint axis is aligned with the $x$-axis, the joint angular velocity follows as
\begin{equation}
\dot{\theta}_i
=
{}^{i}_{i-1}\boldsymbol{\Omega}_{i}^{x}.
\label{eq:theta_dot}
\end{equation}

\subsection{Complementary Filtering}
\label{subsec:complementary}

The IMU-based estimates of $\theta_i$ and $\dot{\theta}_i$ remain affected by measurement noise, sensor bias, and gyroscope drift. The gyroscope-based angular velocity $\dot{\theta}_i$ in~\eqref{eq:theta_dot} provides accurate short-term motion information but accumulates low-frequency drift when integrated. In contrast, the joint-angle estimate $\theta_i$ in~\eqref{eq:theta} provides a drift-bounded orientation reference but is more sensitive to high-frequency measurement noise.

To exploit these complementary characteristics, a complementary filter is employed to combine $\theta_i$ and $\dot{\theta}_i$, yielding the filtered joint-angle estimate $\hat{\theta}_i$, given by
\begin{equation}
\begin{bmatrix}
\dot{\hat{\theta}}_i \\
\dot{\hat{b}}_i
\end{bmatrix}
=
\begin{bmatrix}
0 & 1 \\
0 & 0
\end{bmatrix}
\begin{bmatrix}
\hat{\theta}_i \\
\hat{b}_i
\end{bmatrix}
+
\begin{bmatrix}
k_P \\
k_I
\end{bmatrix}
\left(
\theta_i-\hat{\theta}_i
\right)
+
\begin{bmatrix}
1 \\
0
\end{bmatrix}
\dot{\theta}_i,
\label{eq:filter}
\end{equation}
where $\hat{b}_i$ denotes the auxiliary integral state used to compensate for low-frequency bias and drift, while $k_P$ and $k_I$ are the proportional and integral filter gains, respectively.

\section{Deformation Estimation}
\label{sec:deformation}

Structural flexibility introduces a distributed state that is not captured by conventional joint measurements but directly influences the end-effector pose. Reliable flexibility-aware manipulation therefore requires online reconstruction of the deformation along each link. To this end, this section develops a sensor-efficient multi-IMU framework that determines the dominant modal content of each flexible link and reconstructs its continuous deformation using the minimum set of distributed orientation measurements satisfying the modal observability condition.

\subsection{Mode Shapes in Flexible Links}
\label{subsec:mode_shapes}

To account for link flexibility, each link is modeled as an Euler--Bernoulli beam, assuming a slender structure undergoing small transverse deformation with linear elastic behavior, while axial and shear deformations and rotary inertia are neglected~\cite{yang2015observer}. Under these assumptions, the transverse dynamics of the $i^\text{th}$ link are governed by
\begin{equation}
\rho_i A_i \ddot{\omega}_i(\xi_i, t) + E_i I_i \omega_i''''(\xi_i, t) = f(\xi_i, t),
\label{eq:Euler-Bernoulli}
\end{equation}
where $\omega_i(\xi_i, t)$ denotes the transverse deformation of the $i^\text{th}$ link at spatial coordinate $\xi_i$ and time $t$, as illustrated in Fig.~\ref{fig:deformation}. The parameters $\rho_i$, $A_i$, $E_i$, and $I_i$ represent the mass density, cross-sectional area, Young’s modulus, and second moment of area of the link, respectively. The term $f(\xi_i,t)$ denotes the distributed external load, while $L_i$ represents the link length.
\begin{figure}[ht]
    \centering
    \includegraphics[width=0.83\linewidth]{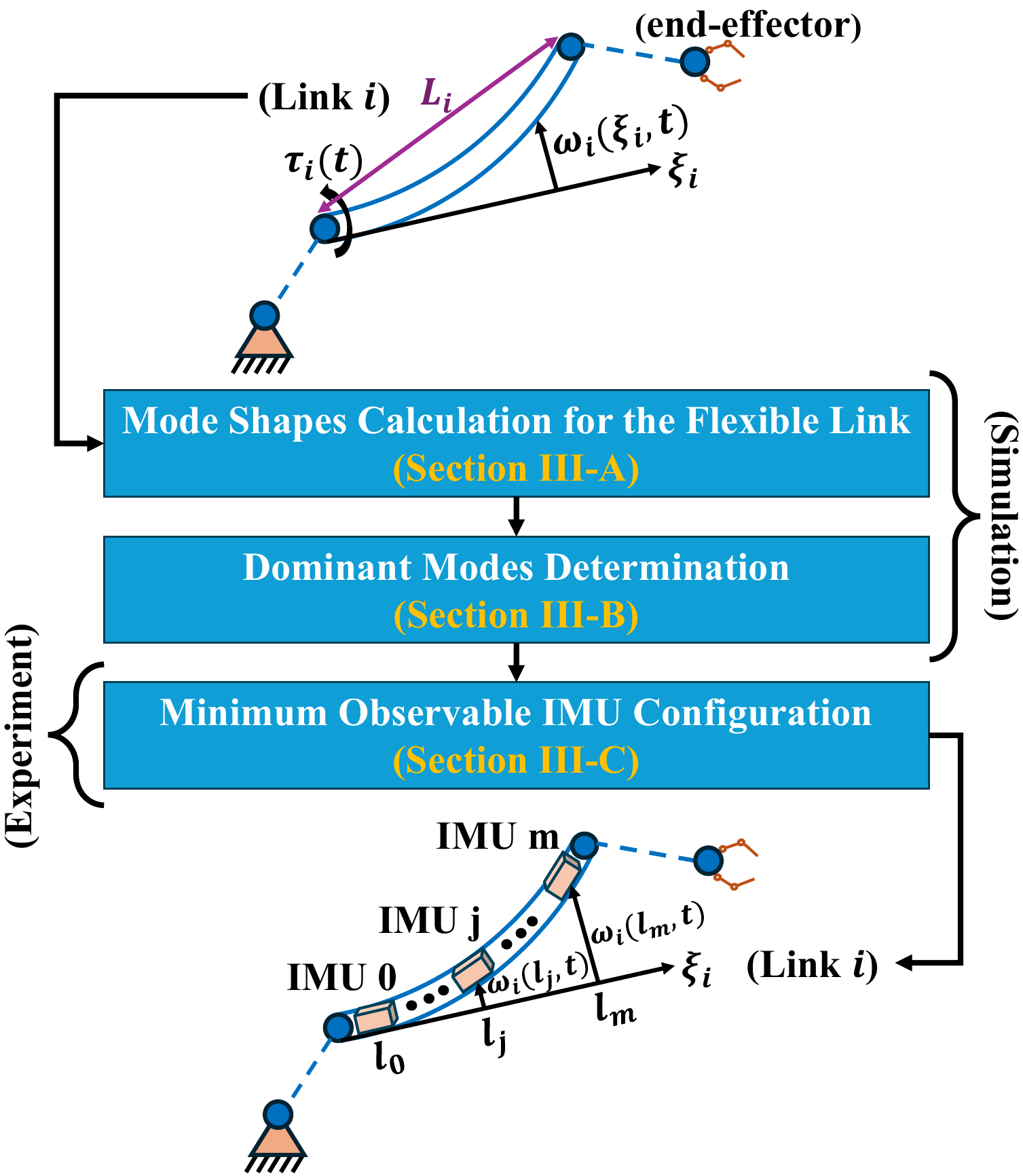}
    \caption{Mode-informed deformation reconstruction using the minimum observable IMU configuration for a flexible link.}
    \label{fig:deformation}
\end{figure}

Throughout this paper, a dot denotes partial differentiation with respect to time $\left( \dot{(\,\cdot\,)} = \partial(\cdot)/\partial t \right)$, 
and a prime denotes partial differentiation with respect to the spatial coordinate $\left( (\,\cdot\,)' = \partial(\cdot)/\partial \xi \right)$.

\begin{assumption}
Each flexible link is connected to the preceding link by a revolute joint and carries a payload mass at its tip. 
The total transverse displacement is decomposed into rigid-body rotation and elastic deformation.
As a result, the elastic deformation satisfies zero displacement and zero slope at the joint location, while the tip is free of externally applied bending moment and supports a concentrated mass.
\label{ass:boundary}
\end{assumption}

Considering Assumption~\ref{ass:boundary}, the boundary conditions for the flexible link are given by~\cite{yang2015observer}
\begin{equation}
\left\{
\begin{split}
    &\omega_i(0, t) = 0, \\
    &\omega_i'(0, t) = 0, \\
    &\omega_i''(L_i, t) = 0, \\
    &E_i I_i \, \omega_i'''(L_i, t) = M_{p_i} \, \ddot{\omega}_i(L_i, t),
\end{split}
\right.
\label{eq:boundary}
\end{equation}
where $M_{p_i}$ denotes the equivalent lumped mass at the free end of the $i^\text{th}$ link used for modal analysis, representing the combined contribution of the downstream components of the serial manipulator.

Using the AMM, the solution of~\eqref{eq:Euler-Bernoulli} can be expressed as
\begin{equation}
\omega_i(\xi_i, t) = \sum_{j=1}^{m} \Phi_{j_i}(\xi_i)\, \eta_{j_i}(t),
\label{eq:AMM}
\end{equation}
where $\Phi_{j_i}(\xi_i)$ denotes the $j^\text{th}$ mode shape and $\eta_{j_i}(t)$ is the corresponding time-dependent generalized coordinate of the $i^\text{th}$ link. Determining the number of mode shapes ($m$) required to accurately model the $i^\text{th}$ link involves performing a mode convergence analysis, which will be presented in the next section.

To calculate the mode shapes $\Phi_{j_i}(\xi_i)$, we consider the following general function~\cite{yaqubi2023semi}
\begin{equation}
\begin{split}
    \Phi_{j_i}(\xi_i) &= C_{1j}(\cos{\beta_j \xi_i}+\cosh{\beta_j \xi_i})\\
    &+ C_{2j}(\cos{\beta_j \xi_i}-\cosh{\beta_j \xi_i}) \\
    &+ C_{3j}(\sin{\beta_j \xi_i}+\sinh{\beta_j \xi_i}) \\
    &+ C_{4j}(\sin{\beta_j \xi_i}-\sinh{\beta_j \xi_i}).
\end{split}
\label{eq:phi}
\end{equation}  
Applying the boundary conditions in~\eqref{eq:boundary} yields
\begin{equation}
\begin{split}
    &C_{1j} = C_{3j} = 0,\\
    &\frac{C_{4j}}{C_{2j}} = - \left( \frac{ \cos{\beta_j L_i} + \cosh{\beta_j L_i} }{ \sin{\beta_j L_i} + \sinh{\beta_j L_i} } \right),
\end{split}
\label{eq:phi_coefficients}
\end{equation} 
resulting in the following characteristic equation
\begin{equation}
\begin{split}
    \frac{M_{p_i}}{\rho_i A_i} \beta_j \Big( \cos{\beta_j L_i} \sinh{\beta_j L_i} - \sin{\beta_j L_i} \cosh{\beta_j L_i} &\Big) \\
    + \cos{\beta_j L_i} \cosh{\beta_j L_i} + 1 &= 0,
\end{split}
\label{eq:characteristic}
\end{equation}
which can be solved numerically to obtain the values of $\beta_j$. Then the $j^\text{th}$ natural frequency of the $i^\text{th}$ link is calculated as
\begin{equation}
    f_{j_i} = \frac{1}{2 \pi} \beta_j^2 \sqrt{\frac{E_i I_i}{\rho_i A_i}}.
\label{eq:f}
\end{equation}

\subsection{Mode Selection Criteria for Flexible Links}
\label{subsec:mode_selection}

Following the derivation of the flexible-link mode shapes in Section~\ref{subsec:mode_shapes}, it is essential to determine the number of dominant modes. Identifying these modes is important for two reasons. First, in simulations, it enables accurate system modeling while limiting computational complexity. Second, from an experimental perspective, the number of IMUs required for each link is determined by the number of retained modes, as explained in Section~\ref{subsec:IMU_network}.

Two complementary criteria are used to determine the number of dominant modes of the $i^\text{th}$ link. The first ensures convergence of the reconstructed deformation, while the second ensures that all modes within the system excitation bandwidth are retained.

\subsubsection{Convergence of Observables Criterion}
\label{subsubsection:coo}

This criterion evaluates the relative change in the reconstructed tip deformation when an additional mode is included. For $k=1,2,\ldots$, it is defined as
\begin{equation}
C_{k_i}(t)
=
\frac{
\left|
\sum_{j=1}^{k+1}\Phi_{j_i}(L_i)\eta_{j_i}(t)
-
\sum_{j=1}^{k}\Phi_{j_i}(L_i)\eta_{j_i}(t)
\right|
}{
\left|
\sum_{j=1}^{k+1}\Phi_{j_i}(L_i)\eta_{j_i}(t)
\right|
+\varepsilon
},
\label{eq:coo}
\end{equation}
where $\delta$ is a predefined convergence threshold and $\varepsilon>0$ is a small regularization constant introduced to avoid numerical singularity when the reconstructed tip deformation approaches zero. Over the evaluation interval $\mathcal{T}$, the convergence-based modal order is selected as
\begin{equation}
m_i^{\mathrm{conv}}
=
\min
\left\{
k\in\mathbb{N}
\;\middle|\;
\max_{t\in\mathcal{T}}C_{k_i}(t)<\delta
\right\}.
\label{eq:coo_mode_selection}
\end{equation}

\subsubsection{Natural Frequency Bandwidth Criterion}
\label{subsubsection:nfb}

The second criterion retains the modes whose natural frequencies lie within the system excitation bandwidth, expressed as
\begin{equation}
f_{k_i}<\alpha f_{\mathrm{exc}},
\label{eq:nfb}
\end{equation}
where $f_{k_i}$ is the $k^\text{th}$ natural frequency of the $i^\text{th}$ link, computed from~\eqref{eq:f}, $\alpha$ is a safety factor, and $f_{\mathrm{exc}}$ denotes the maximum excitation frequency of the system. Since the natural frequencies are ordered increasingly, the bandwidth-based modal order is defined as
\begin{equation}
m_i^{\mathrm{bw}}
=
\max
\left\{
k\in\mathbb{N}
\;\middle|\;
f_{k_i}<\alpha f_{\mathrm{exc}}
\right\}.
\label{eq:nfb_mode_selection}
\end{equation}

The final number of dominant modes retained for the $i^\text{th}$ link is selected as
\begin{equation}
m_i
=
\max
\left\{
m_i^{\mathrm{conv}},
m_i^{\mathrm{bw}}
\right\}.
\label{eq:combined_mode_selection}
\end{equation}
Accordingly, the selected modal order satisfies both the response-convergence requirement and the excitation-bandwidth requirement.

\subsection{Minimum Observable IMU Configuration for Deformation Estimation in Flexible Links}
\label{subsec:IMU_network}

After determining the dominant mode shapes of each link through the convergence analysis in Section~\ref{subsec:mode_selection}, this section presents a deformation estimation methodology that utilizes these mode shapes in conjunction with a network of IMUs. The objective is to accurately reconstruct the elastic deformation of each flexible link while employing the minimum number of IMUs required to reconstruct the retained modal coordinates under the full-rank observability condition.

For generality, based on the convergence analysis presented in Section~\ref{subsec:mode_selection}, it is assumed that the first $m_i$ mode shapes adequately capture the dominant deformation dynamics of the $i^\text{th}$ link. Accordingly, the proposed methodology employs $(m_i+1)$ IMUs distributed along the link at known spatial locations, as illustrated in Fig.~\ref{fig:deformation}, where the reference IMU (IMU $0$) is located at $\xi_i = l_0 = 0$, at which $\omega'_i(0,t)=0$, and the $j^\text{th}$ IMU is located at $\xi_i = l_j$. Starting from the modal representation of the elastic deformation given in~\eqref{eq:AMM}, differentiation with respect to the spatial coordinate yields the local deformation slope:
\begin{equation}
\left\{
\begin{split}
\omega'_i(l_1, t)
&=
\Phi'_{1_i}(l_1)\,\eta_{1_i}(t)
+
\dots
+
\Phi'_{m_i}(l_1)\,\eta_{m_i}(t), \\
\omega'_i(l_2, t)
&=
\Phi'_{1_i}(l_2)\,\eta_{1_i}(t)
+
\dots
+
\Phi'_{m_i}(l_2)\,\eta_{m_i}(t), \\
&\vdots \\
\omega'_i(l_m, t)
&=
\Phi'_{1_i}(l_m)\,\eta_{1_i}(t)
+
\dots
+
\Phi'_{m_i}(l_m)\,\eta_{m_i}(t).
\end{split}
\right.
\label{eq:AMM_primes}
\end{equation}
The deformation slope at each IMU location is directly related to the measured relative orientation between IMUs. Specifically, the slope can be expressed as
\begin{equation}
\omega'_i(l_j, t)
=
\tan\left(\hat{\theta}_j(t)\right),
\end{equation}
where $\hat{\theta}_j(t)$ denotes the filtered relative angle between the reference IMU and the $j^\text{th}$ IMU, obtained using the complementary filter described in Section~\ref{sec:preliminaries} via~\eqref{eq:filter}. Furthermore, the spatial derivatives of the mode shapes, $\Phi'_{j_i}(\xi_i)$, are computed by differentiating the mode shape expression in~\eqref{eq:phi}, as presented in Section~\ref{subsec:mode_shapes}.
Therefore, the system of equations in~\eqref{eq:AMM_primes} can be compactly expressed in matrix form as follows
\begin{equation}
\mathbf{\Omega}'_i
=
\mathbf{\Phi}'_i
\boldsymbol{\eta}_i,
\label{eq:matrix_form}
\end{equation}
where
\begin{equation}
\mathbf{\Omega}'_i
=
\begin{bmatrix}
\omega'_i(l_1,t) \\
\omega'_i(l_2,t) \\
\vdots \\
\omega'_i(l_m,t)
\end{bmatrix}
=
\begin{bmatrix}
\tan(\hat{\theta}_1(t)) \\
\tan(\hat{\theta}_2(t)) \\
\vdots \\
\tan(\hat{\theta}_m(t))
\end{bmatrix}
\in \mathbb{R}^{m_i},
\end{equation}
\begin{equation}
\mathbf{\Phi}'_i
=
\begin{bmatrix}
\Phi'_{1_i}(l_1) & \Phi'_{2_i}(l_1) & \dots & \Phi'_{m_i}(l_1) \\
\Phi'_{1_i}(l_2) & \Phi'_{2_i}(l_2) & \dots & \Phi'_{m_i}(l_2) \\
\vdots & \vdots & \ddots & \vdots \\
\Phi'_{1_i}(l_m) & \Phi'_{2_i}(l_m) & \dots & \Phi'_{m_i}(l_m)
\end{bmatrix}
\in \mathbb{R}^{m_i \times m_i},
\end{equation}
\begin{equation}
\boldsymbol{\eta}_i
=
\begin{bmatrix}
\eta_{1_i}(t) \\
\eta_{2_i}(t) \\
\vdots \\
\eta_{m_i}(t)
\end{bmatrix}
\in \mathbb{R}^{m_i}.
\end{equation}
To ensure unique reconstruction of the retained modal coordinates, the IMU locations must be chosen such that the modal slope matrix satisfies the observability condition
\begin{equation}
\operatorname{rank}\left(\mathbf{\Phi}'_i\right)=m_i.
\label{eq:modal_observability}
\end{equation}
Under this condition, since both $\mathbf{\Omega}'_i$ and $\mathbf{\Phi}'_i$ are known, the modal coordinates $\boldsymbol{\eta}_i$ can be uniquely computed from~\eqref{eq:matrix_form}. Accordingly, reconstructing $m_i$ dominant modes requires $m_i$ independent slope measurements and one reference IMU. 

Consequently, the full elastic deformation of the link, $\omega_i(\xi_i,t)$, can be reconstructed by substituting the estimated modal coordinates $\boldsymbol{\eta}_i(t)$ into the modal expansion~\eqref{eq:AMM}.

\section{Flexibility-Aware Kinematics}
\label{sec:kinematics}
Consider a general planar serial manipulator with $n$ links, as illustrated in Fig.~\ref{fig:FK}, where link flexibility is also taken into account. Since deformation estimation was addressed in Section~\ref{sec:deformation}, we can now reconstruct the forward kinematics algorithms, defined as flexibility-aware forward kinematics.

As shown in Fig.~\ref{fig:FK}, frame $\{W\}$ denotes the inertial frame. Due to link flexibility, assigning a single body-fixed coordinate system to each link is insufficient. Therefore, two body-fixed coordinate systems are attached to each link, one at the base and one at the tip, denoted by $\{B_i\}$ and $\{T_i\}$, respectively.
\begin{figure}[h]
    \centering
    \includegraphics[width=0.78\linewidth]{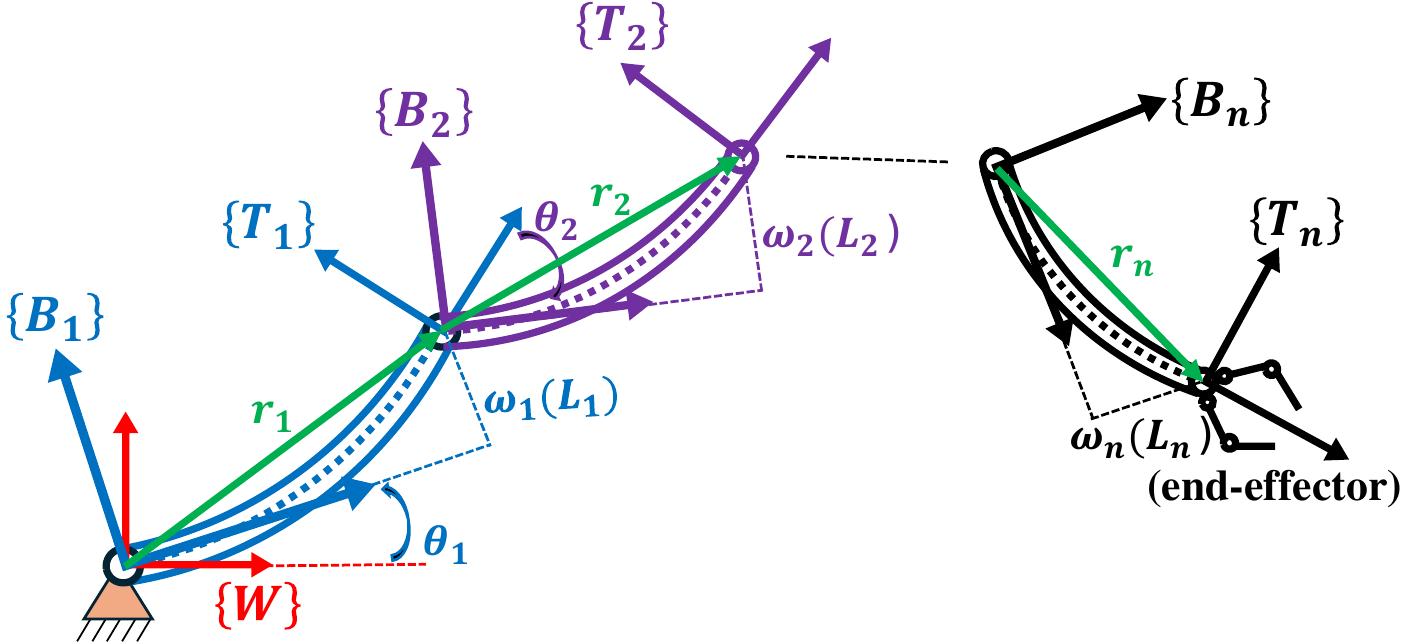}
    \caption{Frame assignment for flexibility-aware forward kinematics of a planar serial flexible manipulator.}
    \label{fig:FK}
\end{figure}

\begin{definition}[Euclidean and Configuration Spaces]
\label{def:R_G}
We denote by $\mathbb{R}^n$ the $n$-dimensional Euclidean space. The configuration space of planar rigid-body motions is described by the special orthogonal group $SO(2)$ and the special Euclidean group $SE(2)$, defined as follows
\begin{equation}
    SO(2) := \left\{ R \in \mathbb{R}^{2\times 2} \mid R^\top R = I,\ \det(R) = 1 \right\},
\end{equation}
\begin{equation}
    SE(2) := \left\{
        G =
        \begin{bmatrix}
            R & p\\
            0 & 1
        \end{bmatrix}
        \,\middle|\,
        R \in SO(2),\ p \in \mathbb{R}^2
    \right\}.
\end{equation}
\end{definition}

Now, consider the $i^{\text{th}}$ link, which is connected to the $(i-1)^{\text{th}}$ link via a revolute joint, as illustrated in Fig.~\ref{fig:FK}. The homogeneous transformation matrices required for the forward kinematics of the robotic manipulator are given by
\begin{equation}
\begin{split}
    {}^{T_{i-1}}G_{B_i} &=
        \begin{bmatrix}
            {}^{T_{i-1}}R_{B_i} & 0_{2\times1}\\
            0_{1\times2} & 1
        \end{bmatrix} \in SE(2), \\
    {}^{T_{i-1}}R_{B_i} &=
        \begin{bmatrix}
            \cos{\theta_i} & -\sin{\theta_i}\\
            \sin{\theta_i} & \cos{\theta_i}
        \end{bmatrix} \in SO(2),
\end{split}
\label{Ti1_G_Bi}
\end{equation}
\begin{equation}
\begin{split}
    {}^{B_i}G_{T_i} &=
        \begin{bmatrix}
            {}^{B_i}R_{T_i} & {}^{B_i}r_i\\
            0_{1\times2} & 1
        \end{bmatrix} \in SE(2), \\
    {}^{B_i}R_{T_i} &=
        \begin{bmatrix}
            \cos{\theta_{end_i}} & -\sin{\theta_{end_i}}\\
            \sin{\theta_{end_i}} & \cos{\theta_{end_i}}
        \end{bmatrix} \in SO(2),
\end{split}
\label{Bi_G_Ti}
\end{equation}
where $\theta_i(t)$ denotes the relative joint angle, as defined in Section~\ref{sec:preliminaries}, and $\theta_{\mathrm{end}_i}(t)$ represents the orientation of the link tip resulting from elastic deformation, as derived in Section~\ref{sec:deformation}
\begin{equation}
    \theta_{\mathrm{end}_i}(t) = \tan^{-1}\!\big(\omega'_i(L_i, t)\big).
\label{eq:theta_end_i}
\end{equation}
It is worth noting that $\{T_0\} = \{W\}$.

\begin{assumption}
\label{ass}
Each flexible link is assumed to undergo small transverse deformation, such that
\begin{equation}
\frac{|\omega_i(L_i,t)|}{L_i}\ll 1,
\qquad
\max_{\xi_i\in[0,L_i]}
|\omega'_i(\xi_i,t)|\ll 1.
\label{eq:small_deformation_condition}
\end{equation}
Under this condition, axial deformation is neglected, and the deformed link is approximated by the straight chord connecting its base and tip. Therefore, the position of frame $\{T_i\}$ relative to frame $\{B_i\}$, expressed in frame $\{B_i\}$, is assumed as
\begin{equation}
    {}^{B_i}r_i(t) =
    \begin{bmatrix}
        \sqrt{L_i^2 - \omega_i^2(L_i, t)} \\
        \omega_i(L_i)
    \end{bmatrix}.
    \label{r_i}
\end{equation}
\end{assumption}

The forward kinematics of the robotic manipulator is therefore obtained as
\begin{equation}
    {}^{W}G_{T_n} = \prod_{i=1}^{n} \left( {}^{T_{i-1}}G_{B_i} \, {}^{B_i}G_{T_i} \right)
    =
    \begin{bmatrix}
        {}^{W}R_{T_n} & {}^{W}p_{T_n}\\
        0_{1\times2} & 1
    \end{bmatrix}.
\end{equation}
In this expression, ${}^{W}R_{T_n}\in SO(2)$ represents the orientation of the end-effector frame, parameterized by the end-effector angle $\theta_{EE}$, while ${}^{W}p_{T_n}=[y_{EE},z_{EE}]^T\in\mathbb{R}^2$ denotes its position. Both quantities are defined relative to and expressed in the inertial frame ${W}$.

Hence, the flexibility-aware kinematic model maps the IMU-estimated joint states and reconstructed link deformation into the corresponding end-effector position and orientation. The same reconstructed flexible-link state is subsequently used in the control formulation, where deformation-dependent quantities are incorporated into the closed-loop control.

\section{Flexibility-Aware Controller Design}
\label{sec:control}
The previous section used the reconstructed link deformation in the geometric mapping from joint space to the end-effector pose. The same deformation information is now incorporated at the dynamic level for closed-loop control. Specifically, while the flexibility-aware kinematics uses the reconstructed tip deformation and tip slope to determine the deformed link pose, the controller uses deformation-dependent quantities, including the link-tip deformation and base curvature, within the PDE-based control law. Thus, the kinematic and control formulations constitute two complementary uses of the same IMU-reconstructed flexible state within the proposed sensing-to-control framework.

%As discussed previously, the deformation of a link, computed using the method introduced in Section~\ref{sec:deformation}, was incorporated in Section~\ref{sec:kinematics} to develop flexibility-aware forward kinematics. In this section, we investigate an additional application of the computed deformation, namely its integration into closed-loop control algorithms.

As the equations of motion for flexible links are governed by PDEs, incorporating these PDEs into the controller design enables a more accurate representation of the system dynamics within the control law. Therefore, prior to designing the controller, the equations of motion of the flexible manipulator must be derived. Inspired by the modeling and controller design procedure for SLFMs presented in~\cite{barjini2025deep}, this work extends the approach to DLFMs, as illustrated in Fig.~\ref{fig:control}.

\begin{figure}[h]
    \centering
    \includegraphics[width=0.74\linewidth]{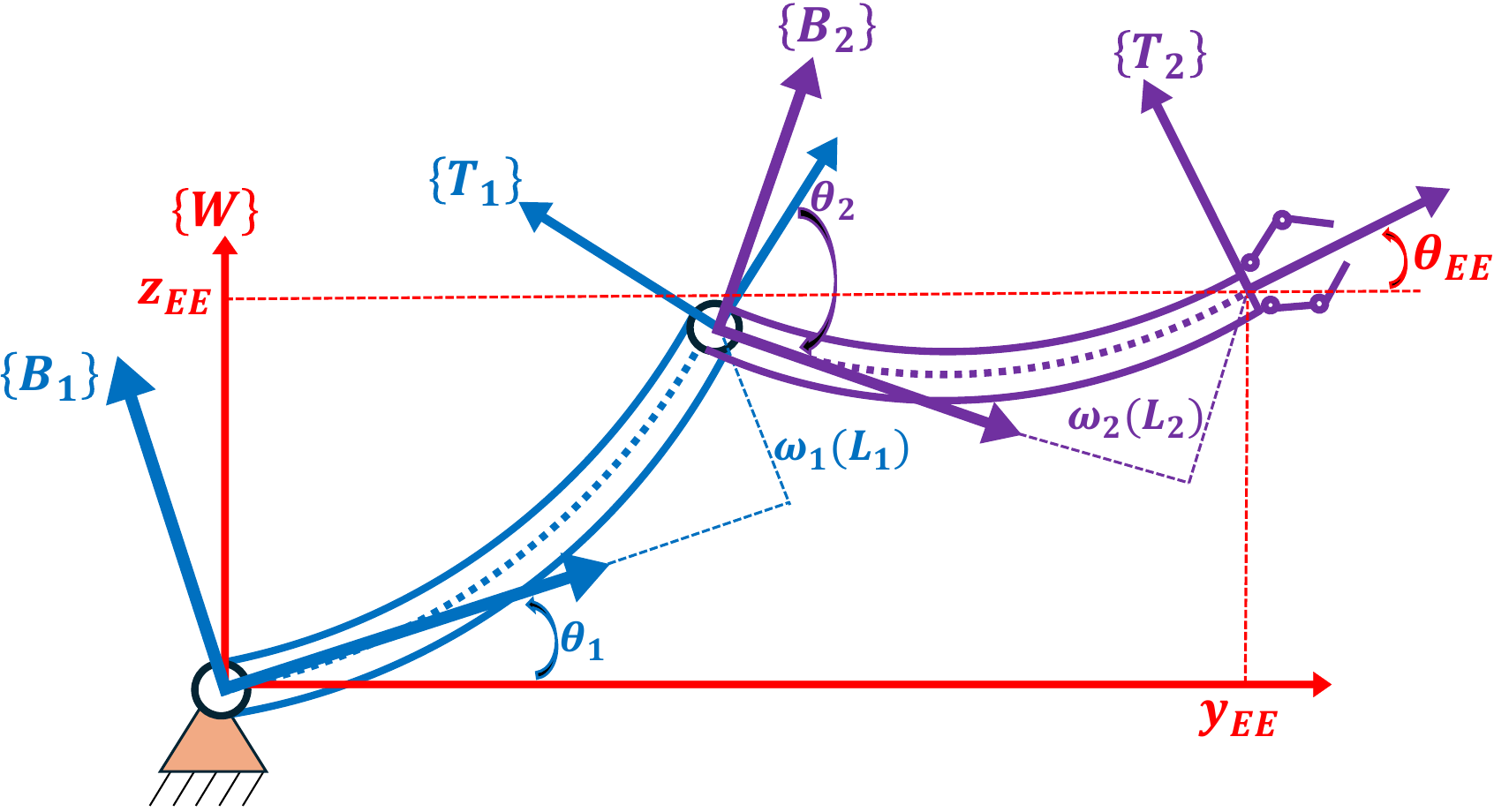}
    \caption{DLFM configuration and deformation variables used for the deformation-informed PDE-based controller.}
    \label{fig:control}
\end{figure}

\subsection{Dynamics Modeling of DLFM}
\label{subsec:mathematical_modeling}
For the controller design, the DLFM is described in the planar domain and each flexible link is modeled as an Euler--Bernoulli beam. The formulation assumes small transverse deformation, while axial and shear deformations are neglected. The total link motion is therefore represented by the superposition of rigid-body joint motion and elastic deformation, with the corresponding boundary conditions given in Assumption~\ref{ass:boundary}.

For the DLFM illustrated in Fig.~\ref{fig:control}, the total transverse displacements of the first and second flexible links are therefore expressed as
\begin{equation}
    \begin{split}
        s_1(\xi_1, t) &= \xi_1 \theta_1(t) + \omega_1(\xi_1, t), \\
        s_2(\xi_2, t) &= L_1 \theta_1(t) + \omega_1(L_1, t) \\
        &\quad + \xi_2 \left[\theta_1(t)+\theta_2(t)\right]
        + \omega_2(\xi_2, t).
    \end{split}
\label{eq:s_i}
\end{equation}
In the next step, to derive the equations of motion, the kinetic energy ($T$), potential energy ($U$), and virtual work done by external torques ($\delta W$), can be expressed as
\begin{equation}
    \begin{split}
        T &= \frac{1}{2} I_{h_1} \dot \theta_1^2(t) + \frac{1}{2} \rho_1 A_1 \int_0^{L_1} \dot s_1^2(\xi_1, t)  d\xi_1 + \frac{1}{2} M_1 \dot s_1^2(L_1, t) \\
          &+ \frac{1}{2} I_{h_2} \dot \theta_2^2(t) + \frac{1}{2} \rho_2 A_2 \int_0^{L_2} \dot s_2^2(\xi_2, t)  d\xi_2 + \frac{1}{2} M_2 \dot s_2^2(L_2, t),
    \end{split} 
\label{eq:T}
\end{equation}
\begin{equation}
    \begin{split}
        U &= m_1 g \frac{L_1}{2}  \sin{\theta_1(t)} \\
          &+ M_1 g [L_1 \sin{\theta_1(t)} + \omega_1(L_1, t)\cos{\theta_1(t)}]\\
          &+ m_2 g [L_1 \sin{\theta_1(t)} + \frac{L_2}{2}  \sin{ ( \theta_1(t)  + \theta_2(t)  )} ] \\
          &+ M_2 g [L_1 \sin{\theta_1(t)} + \omega_1(L_1, t)\cos{\theta_1(t)} \\
          &\quad \, \, +   L_2 \sin{ ( \theta_1(t) + \theta_2(t) )}  + \omega_2(L_2, t)\cos{ ( \theta_1(t) + \theta_2(t) )}  ] \\
          &+ \frac{1}{2} E_1 I_1 \int_0^{L_1} [s_1''(\xi_1, t)]^2 d\xi_1 + \frac{1}{2} E_2 I_2 \int_0^{L_2} [s_2''(\xi_2, t)]^2 d\xi_2,
    \end{split}
\label{eq:U}
\end{equation}
\begin{equation}
\delta W
=
\tau_1(t)\,\delta\theta_1(t)
+
\tau_2(t)\,\delta\theta_2(t).
\label{eq:virtual_work}
\end{equation}
where \( \tau_i \), \( I_{h_i} \), \( \rho_i \), \( A_i \), \( m_i \), \( L_i \), \( E_i I_i \), and \( M_i \) denote the torque, hub rotational inertia, density, cross-sectional area, link mass, link length, bending stiffness, and payload of the $i^\text{th}$ flexible link, respectively.

Now, by applying the extended Hamilton’s principle, $\int_{t_1}^{t_2} \left( \delta T - \delta U + \delta W \right) dt = 0$,
the governing equations of motion can be derived as follows
\begin{equation}
    \begin{split}
        &I_{h_1} \ddot{\theta}_1(t) - E_1 I_1 \omega_1''(0, t) - E_2 I_2 \omega_2''(0, t) \\
        &+ \left(\frac{1}{2} m_1 + m_2 \right) g L_1 \cos(\theta_1(t)) \\
        &- (M_1 + M_2) g \omega_1(L_1, t) \sin(\theta_1(t)) \\
        &+ \frac{1}{2} m_2 g L_2 \cos(\theta_1(t) + \theta_2(t)) \\
        &- M_2 g \omega_2(L_2, t) \sin(\theta_1(t) + \theta_2(t)) = \tau_1(t),
    \end{split}
\label{eq:thetaddot_1}
\end{equation}
\begin{equation}
    \begin{split}
        &I_{h_2} \ddot{\theta}_2(t) - E_2 I_2 \omega_2''(0, t) \\
        &+ \frac{1}{2} m_2 g L_2 \cos(\theta_1(t) + \theta_2(t)) \\
        &- M_2 g \omega_2(L_2, t) \sin(\theta_1(t) + \theta_2(t)) = \tau_2(t),
    \end{split}
\label{eq:thetaddot_2}
\end{equation}
\begin{equation}
    \begin{split}
        \rho_1 A_1 \left( \xi_1 \ddot{\theta}_1(t) + \ddot{\omega}_1(\xi_1, t) \right) + E_1 I_1 \omega''''_1(\xi_1, t) = 0,
    \end{split}
\label{eq:omegaddot_1}
\end{equation}
\begin{equation}
    \begin{split}
        &\rho_2 A_2 \left( L_1 \ddot{\theta}_1(t) + \ddot{\omega}_1(L_1, t) \right. \\   
        &\left. \quad \, \, +\xi_2 [\ddot{\theta}_1(t) + \ddot{\theta}_2(t)]  + \ddot{\omega}_2(\xi_2, t)  \right) + E_2 I_2 \omega''''_2(\xi_2, t) = 0,
    \end{split}
\label{eq:omegaddot_2}
\end{equation}
along with the associated boundary conditions, as
\begin{equation}
    \begin{split}
        &M_1 \left( L_1 \ddot{\theta}_1(t) + \ddot{\omega}_1(L_1, t) \right) \\
        &+ (M_1 + M_2) g \cos(\theta_1(t)) - M_2 g \cos(\theta_1(t) + \theta_2(t)) \\
        &= E_1 I_1 \omega'''_1(L_1, t) ,
    \end{split}
\label{eq:BC1}
\end{equation}
\begin{equation}
    \begin{split}
        &M_2 \left( L_1 \ddot{\theta}_1(t) + \ddot{\omega}_1(L_1, t)  + L_2 [\ddot{\theta}_1(t) + \ddot{\theta}_2(t)]  + \ddot{\omega}_2(L_2, t) \right) \\
        &+ M_2 g \cos(\theta_1(t) + \theta_2(t)) = E_2 I_2 \omega'''_2(L_2, t) ,
    \end{split}
\label{eq:BC2}
\end{equation}
\begin{equation}
    \omega_i(0,t) = 0, \, \omega_i'(0,t) = 0, \, \omega_i''(L_i,t) = 0, \quad i = 1,2.
    \label{eq:BC3}
\end{equation}

\subsection{PDE-Based Controller Design and Stability Analysis}
\label{subsec:pde_controller}
As shown in Section~\ref{subsec:mathematical_modeling}, the dynamics of flexible structures are inherently described by PDEs. Embedding these dynamics directly into the controller design framework enables a more precise characterization of the system behavior. Accordingly, the deformation-informed PDE-based control law for the DLFM, illustrated in Fig.~\ref{fig:control}, is defined as
\begin{equation}
\left\{
\begin{split}
    \tau_i(t) &= K_{p_i} e_i(t) + K_{d_i} \dot{e}_i(t) - H_i(t), \quad i = 1,2 \\
    H_1(t) &=  E_1 I_1 \omega_1''(0, t) + E_2 I_2 \omega_2''(0, t) \\
    &- \left(\frac{1}{2} m_1 + m_2 \right) g L_1 \cos(\theta_1(t)) \\
    &+ (M_1 + M_2) g \omega_1(L_1, t) \sin(\theta_1(t)) \\
    &- \frac{1}{2} m_2 g L_2 \cos(\theta_1(t) + \theta_2(t)) \\
    &+ M_2 g \omega_2(L_2, t) \sin(\theta_1(t) + \theta_2(t)) - I_{h_1} \ddot{\theta}_{d_1}, \\
    H_2(t) &=  E_2 I_2 \omega_2''(0, t) - \frac{1}{2} m_2 g L_2 \cos(\theta_1(t) + \theta_2(t)) \\
    &+ M_2 g \omega_2(L_2, t) \sin(\theta_1(t) + \theta_2(t)) - I_{h_2} \ddot{\theta}_{d_2},
\end{split}
\right.
\label{eq:tau_i}
\end{equation}
where $e_i(t) = \theta_{d_i}(t) - \theta_i(t)$ denotes the tracking error, and $K_{p_i}$ and $K_{d_i}$ are positive control gains. 

It is important to emphasize that, in the control law~\eqref{eq:tau_i}, both the tip deformation of each link, $\omega_i(L_i, t)$, and the second spatial derivative of the deformation at the base of each link, $\omega_i''(0, t)$, are obtained using the proposed deformation estimation method described in Section~\ref{sec:deformation}.

\begin{theorem}
\label{thm:tracking_stability}
Under exact compensation of the modeled deformation-dependent, gravity, and feedforward terms, the control law in~\eqref{eq:tau_i} guarantees exponential stability of the joint-angle tracking-error subsystem, provided that $I_{h_i}>0$, $K_{p_i}>0$, and $K_{d_i}>0$ for $i=1,2$. Consequently, the tracking errors $e_i(t)$ and $\dot e_i(t)$ converge exponentially to zero.
\end{theorem}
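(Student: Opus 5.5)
Substituting the control law~\eqref{eq:tau_i} into the joint equations~\eqref{eq:thetaddot_1} and~\eqref{eq:thetaddot_2} should turn the tracking-error subsystem into two decoupled linear second-order equations, and a quadratic Lyapunov function with a small cross term then gives exponential convergence. Under exact compensation, the $\omega_i''(0,t)$, $\omega_i(L_i,t)$ and gravity terms in $H_i(t)$ are taken to equal exactly the corresponding terms in the model. The estimates from Section~\ref{sec:deformation} are treated as exact, so no estimation error enters.

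\emph{Closed-loop error equations.} In~\eqref{eq:thetaddot_1}, all terms other than $I_{h_1}\ddot\theta_1$ appear in $H_1$ with opposite sign. Substitution should therefore give
\begin{equation*}
I_{h_1}\ddot\theta_1 = K_{p_1}e_1 + K_{d_1}\dot e_1 + I_{h_1}\ddot\theta_{d_1}.
\end{equation*}
The same check applies to~\eqref{eq:thetaddot_2}. With $e_i=\theta_{d_i}-\theta_i$, this becomes
\begin{equation*}
I_{h_i}\ddot e_i + K_{d_i}\dot e_i + K_{p_i}e_i = 0,\qquad i=1,2.
\end{equation*}
The distributed states $\omega_i$ drop out of these equations completely. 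They act only as signals inside the cancellation, which is why the result concerns only the joint-angle tracking-error subsystem. The main thing to verify here is sign consistency between $H_i$ and the model, term by term, including the coupling term $E_2I_2\omega_2''(0,t)$ in the first joint equation.

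\emph{Exponential stability.} Since $I_{h_i},K_{p_i},K_{d_i}>0$, the characteristic polynomial $I_{h_i}s^2+K_{d_i}s+K_{p_i}$ is Hurwitz, which already settles the claim. For the more usual Lyapunov presentation, I would take
\begin{equation*}
V=\sum_{i=1}^2\left(\tfrac12 I_{h_i}\dot e_i^2+\tfrac12 K_{p_i}e_i^2+\gamma I_{h_i}e_i\dot e_i\right)
\end{equation*}
with a small $\gamma>0$. For $\gamma<\min_i\sqrt{K_{p_i}/I_{h_i}}$, $V$ is positive definite and bounded between quadratic forms in $(e_i,\dot e_i)$. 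Differentiating along the error equations gives
\begin{equation*}
\dot V=\sum_{i=1}^2\left[-(K_{d_i}-\gamma I_{h_i})\dot e_i^2-\gamma K_{p_i}e_i^2-\gamma K_{d_i}e_i\dot e_i\right].
\end{equation*}
Choosing $\gamma$ small enough, via Young's inequality, makes $\dot V\le-\lambda V$ for some $\lambda>0$. Together with the quadratic bounds on $V$, this yields $|e_i(t)|,|\dot e_i(t)|\le c\,e^{-\lambda t/2}$.

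\emph{Expected obstacle.} The analysis itself is routine. The delicate part is stating precisely what the theorem does and does not cover. I would remark that it guarantees exponential stability of the error subsystem only; boundedness of $\omega_i$ would additionally need a separate energy argument based on~\eqref{eq:omegaddot_1}--\eqref{eq:BC3}, which the theorem does not claim.
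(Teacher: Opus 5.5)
Your proposal is correct and follows the paper's proof. Both use the same term-by-term cancellation, which does check out including the $E_2I_2\omega_2''(0,t)$ coupling term, to reach $I_{h_i}\ddot e_i+K_{d_i}\dot e_i+K_{p_i}e_i=0$, followed by a Hurwitz argument and a quadratic Lyapunov bound. The only cosmetic difference is that the paper obtains $\mathbf{P}_i$ implicitly from $\mathbf{A}_i^{\top}\mathbf{P}_i+\mathbf{P}_i\mathbf{A}_i=-\mathbf{Q}_i$, whereas you write an explicit energy-plus-cross-term function, and your $\dot V$ computation and the condition $\gamma<\min_i\sqrt{K_{p_i}/I_{h_i}}$ are correct.
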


\textit{Proof:}
Substituting the control law~\eqref{eq:tau_i} into the joint dynamics in~\eqref{eq:thetaddot_1} and~\eqref{eq:thetaddot_2}, and assuming exact compensation of the modeled deformation-dependent, gravity, and feedforward terms, yields the closed-loop tracking-error dynamics
\begin{equation}
I_{h_i}\ddot e_i(t)
+
K_{d_i}\dot e_i(t)
+
K_{p_i}e_i(t)
=
0,
\qquad i=1,2.
\label{eq:closed_loop_error}
\end{equation}
For each joint, define the tracking-error state vector as
\begin{equation}
\mathbf{x}_i(t)
=
\begin{bmatrix}
e_i(t) &
\dot e_i(t)
\end{bmatrix}^{\top}.
\label{eq:error_state}
\end{equation}
Using this definition, the error dynamics in~\eqref{eq:closed_loop_error} can be expressed in the state-space form
\begin{equation}
\dot{\mathbf{x}}_i(t)
=
\mathbf{A}_i\mathbf{x}_i(t),
\qquad
\mathbf{A}_i
=
\begin{bmatrix}
0 & 1\\[1mm]
-\dfrac{K_{p_i}}{I_{h_i}} &
-\dfrac{K_{d_i}}{I_{h_i}}
\end{bmatrix},
\label{eq:error_state_matrix}
\end{equation}
where $\mathbf{A}_i$ denotes the closed-loop state matrix of the $i^\text{th}$ joint tracking-error subsystem.
The characteristic polynomial associated with
$\mathbf{A}_i$ is
\begin{equation}
s^2
+
\frac{K_{d_i}}{I_{h_i}}s
+
\frac{K_{p_i}}{I_{h_i}}.
\label{eq:error_characteristic_polynomial}
\end{equation}
Since $I_{h_i}>0$, $K_{p_i}>0$, and $K_{d_i}>0$, both coefficients of the characteristic polynomial are positive. Therefore, $\mathbf{A}_i$ is Hurwitz, i.e., all of its eigenvalues have strictly negative real parts.
For a Hurwitz matrix $\mathbf{A}_i$, and for any selected symmetric positive-definite matrix $\mathbf{Q}_i=\mathbf{Q}_i^{\top}\succ0$, there exists a unique symmetric positive-definite matrix $\mathbf{P}_i=\mathbf{P}_i^{\top}\succ0$ satisfying the continuous-time Lyapunov equation
\begin{equation}
\mathbf{A}_i^{\top}\mathbf{P}_i
+
\mathbf{P}_i\mathbf{A}_i
=
-\mathbf{Q}_i.
\label{eq:lyapunov_matrix_equation}
\end{equation}
Consider the quadratic Lyapunov function
\begin{equation}
V(\mathbf{x})
=
\sum_{i=1}^{2}
\mathbf{x}_i^{\top}\mathbf{P}_i\mathbf{x}_i.
\label{eq:quadratic_lyapunov}
\end{equation}
Since $\mathbf{P}_i\succ0$, $V(\mathbf{x})$ is positive definite and satisfies $V(\mathbf{x})>0$ for every $\mathbf{x}\neq\mathbf{0}$, with $V(\mathbf{0})=0$.
Taking the time derivative of~\eqref{eq:quadratic_lyapunov} along~\eqref{eq:error_state_matrix} gives
\begin{equation}
\begin{split}
\dot V
&=
\sum_{i=1}^{2}
\left(
\dot{\mathbf{x}}_i^{\top}\mathbf{P}_i\mathbf{x}_i
+
\mathbf{x}_i^{\top}\mathbf{P}_i\dot{\mathbf{x}}_i
\right)
\\
&=
\sum_{i=1}^{2}
\mathbf{x}_i^{\top}
\left(
\mathbf{A}_i^{\top}\mathbf{P}_i
+
\mathbf{P}_i\mathbf{A}_i
\right)
\mathbf{x}_i
\\
&=
-
\sum_{i=1}^{2}
\mathbf{x}_i^{\top}\mathbf{Q}_i\mathbf{x}_i.
\end{split}
\label{eq:quadratic_lyapunov_derivative}
\end{equation}
To express the result compactly, define the overall tracking-error state as
\begin{equation}
\mathbf{x}
=
\begin{bmatrix}
\mathbf{x}_1^{\top} &
\mathbf{x}_2^{\top}
\end{bmatrix}^{\top},
\end{equation}
and the corresponding block-diagonal matrices as
\begin{equation}
\mathbf{P}
=
\operatorname{blkdiag}
\left(
\mathbf{P}_1,\mathbf{P}_2
\right),
\qquad
\mathbf{Q}
=
\operatorname{blkdiag}
\left(
\mathbf{Q}_1,\mathbf{Q}_2
\right).
\end{equation}
Accordingly,
\begin{equation}
V
=
\mathbf{x}^{\top}\mathbf{P}\mathbf{x},
\qquad
\dot V
=
-\mathbf{x}^{\top}\mathbf{Q}\mathbf{x}.
\end{equation}
Since $\mathbf{P}\succ0$ and $\mathbf{Q}\succ0$, the standard
quadratic-form bounds give
\begin{equation}
\lambda_{\min}(\mathbf{P})\|\mathbf{x}\|_2^2
\leq
V
\leq
\lambda_{\max}(\mathbf{P})\|\mathbf{x}\|_2^2,
\label{eq:V_bounds}
\end{equation}
where $\lambda_{\min}(\cdot)$ and $\lambda_{\max}(\cdot)$ denote the minimum and maximum eigenvalues of a symmetric matrix, respectively, and $\|\mathbf{x}\|_2$ denotes the Euclidean norm. Similarly,
\begin{equation}
\dot V
=
-\mathbf{x}^{\top}\mathbf{Q}\mathbf{x}
\leq
-\lambda_{\min}(\mathbf{Q})
\|\mathbf{x}\|_2^2.
\end{equation}
Using the upper bound in~\eqref{eq:V_bounds}, this becomes
\begin{equation}
\dot V
\leq
-\frac{\lambda_{\min}(\mathbf{Q})}
{\lambda_{\max}(\mathbf{P})}
V.
\label{eq:V_decay}
\end{equation}
Define
\begin{equation}
\gamma
=
\frac{\lambda_{\min}(\mathbf{Q})}
{\lambda_{\max}(\mathbf{P})}
>0.
\label{eq:gamma}
\end{equation}
Equation~\eqref{eq:V_decay} therefore implies
\begin{equation}
V(t)
\leq
V(0)e^{-\gamma t}.
\label{eq:V_exponential_decay}
\end{equation}
Combining~\eqref{eq:V_bounds} and
\eqref{eq:V_exponential_decay} yields
\begin{equation}
\|\mathbf{x}(t)\|_2
\leq
\sqrt{
\frac{\lambda_{\max}(\mathbf{P})}
{\lambda_{\min}(\mathbf{P})}
}
e^{-\gamma t/2}
\|\mathbf{x}(0)\|_2.
\label{eq:error_exponential_bound}
\end{equation}
Hence, $\mathbf{x}(t)$ converges exponentially to the origin, implying $e_i(t),\dot e_i(t)\to0$ for $i=1,2$, and establishing exponential stability of the joint-angle tracking-error subsystem.

\section{Numerical and Experimental Results}
\label{sec:results}
Since the proposed methodology comprises several stages, namely deformation estimation (Section~\ref{sec:deformation}), and its application in flexibility-aware kinematics (Section~\ref{sec:kinematics}), and control (Section~\ref{sec:control}), each step should be examined individually to ensure its overall effectiveness.

\subsection{Mode Convergence Analysis}
\label{subsec:convergence_results}

As discussed in Section~\ref{subsec:mode_shapes}, the primary parameters governing the flexibility of the $i^{\text{th}}$ link are the flexural rigidity $E_i I_i$, the mass per unit length $\rho_i A_i$, the link length $L_i$, and the payload mass $M_{p_i}$.

To evaluate the influence of these parameters, a flexible-link manipulator is simulated while enforcing the joint angle to follow a linear chirp trajectory defined as
\begin{equation}
\theta(t) = \theta_0  \sin \left[ \left( \frac{\Omega_{ch,\max}}{2T} \right) t^2 \right],
\end{equation}
where $\theta_0$ denotes the trajectory amplitude, $\Omega_{ch,\max}$ is the maximum instantaneous angular frequency, and $T$ represents the duration of the chirp signal. In this study, $\theta_0 = 10~\text{deg}$, $\Omega_{ch,\max} = \pi~\text{rad/s}$, and $T = 50~\text{s}$. This excitation provides a broad frequency sweep, ensuring sufficient stimulation of multiple vibration modes of the flexible link.

To investigate a wide range of flexible-link characteristics, simulations were performed over a broad parameter space considering two scenarios. All simulations were carried out in MATLAB/Simulink (R2024a) using a fixed-step solver with a sampling time of $0.001~\text{s}$ (1~kHz), ensuring accurate capture of the flexible dynamics. 

(i) In the first scenario, the effects of $E_i I_i$ and $\rho_i A_i$ were analyzed while keeping $L_i = 4.5~\text{m}$ and $M_{p_i} = 20~\mathrm{kg}$ fixed. (ii) In the second scenario, the effects of $L_i$ and $M_{p_i}$ were investigated while maintaining $E_i I_i = 74300~\text{Nm}^2$ and $\rho_i A_i = 5.37~\text{kg/m}$ constant. These parameter values were selected to match the characteristics of the experimental platform used in this study.

For both scenarios, the number of dominant modes of the $i^{\text{th}}$ link was determined using the criteria defined in Section~\ref{subsec:mode_selection}:

\subsubsection{Convergence of Observables Criterion Results}
\label{subsubsection:coo_results}

Using \eqref{eq:coo_mode_selection} with a threshold of $\delta = 10^{-6}$, the convergence of observables criterion was applied over the parameter ranges defined in the two scenarios. The resulting number of dominant modes is illustrated in Fig.~\ref{fig:mode_convergence1}.

\begin{figure}[ht]
    \centering
    \includegraphics[width=1\linewidth]{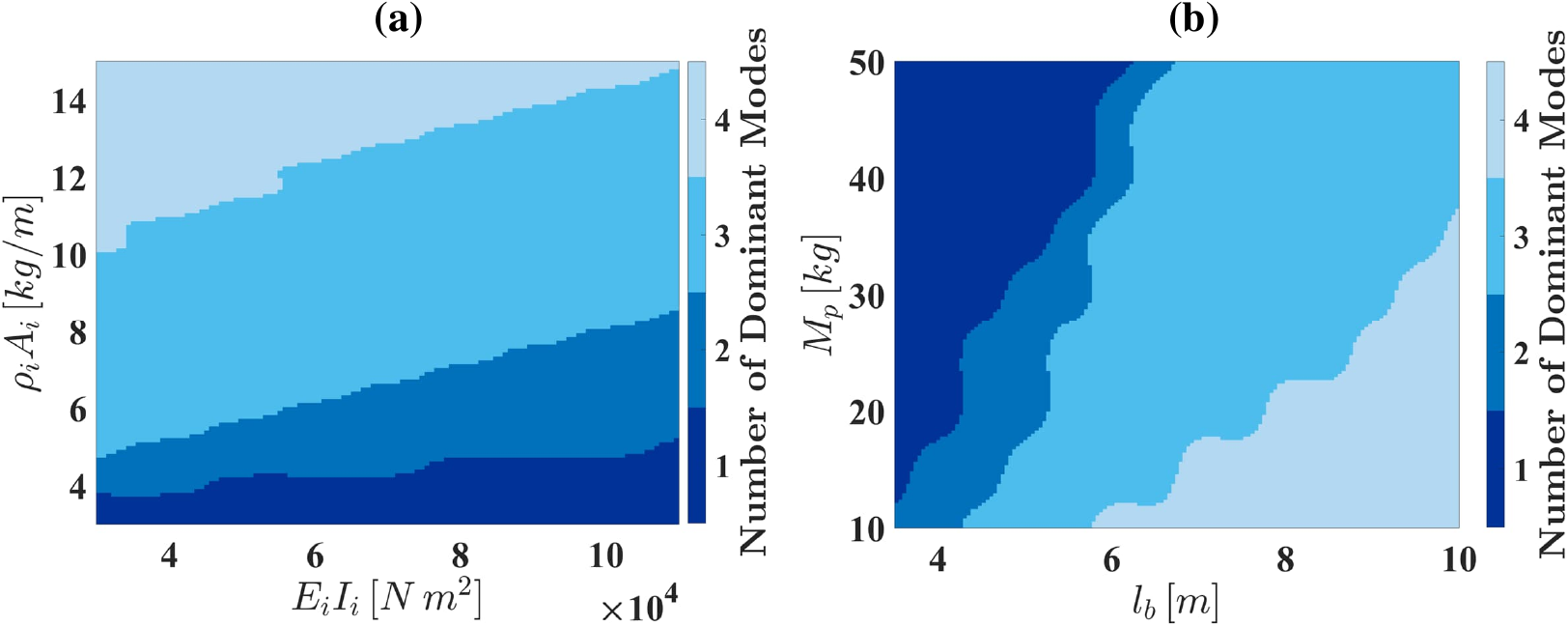}
    \caption{Number of dominant modes identified by the convergence-of-observables criterion: (a) effects of flexural rigidity $E_iI_i$ and mass per unit length $\rho_iA_i$; (b) effects of link length $L_i$ and equivalent tip mass $M_{p_i}$.}
    \label{fig:mode_convergence1}
\end{figure}

As shown in Fig.~\ref{fig:mode_convergence1}(a), links with lower flexural rigidity $E_i I_i$ and higher mass per unit length $\rho_i A_i$ require a larger number of dominant modes. This behavior arises because more flexible and heavier links exhibit richer vibration dynamics, causing higher-order modes to contribute more significantly to the overall response.

Fig.~\ref{fig:mode_convergence1}(b) further shows that longer links ($L_i$) are more sensitive to flexibility effects, which increases the number of dominant modes required for accurate modeling. In contrast, increasing the payload mass $M_{p_i}$ tends to reduce the number of dominant modes, as the added inertia suppresses higher-frequency vibrations of the flexible link.

\subsubsection{Natural Frequency Bandwidth Criterion Results}
\label{subsubsection:nfb_results}

Under the first scenario, the results shown in Fig.~\ref{fig:mode_convergence2} illustrate the influence of $E_i I_i$ and $\rho_i A_i$ on the first four natural frequencies.
The results indicate that links with lower $E_i I_i$ and higher $\rho_i A_i$ exhibit lower $f_{k_i}$; consequently, more mode shapes are excited according to the natural frequency bandwidth criterion. This observation is consistent with the conclusion obtained from the convergence of observables criterion.

\begin{figure}[ht]
    \centering
    \includegraphics[width=1\linewidth]{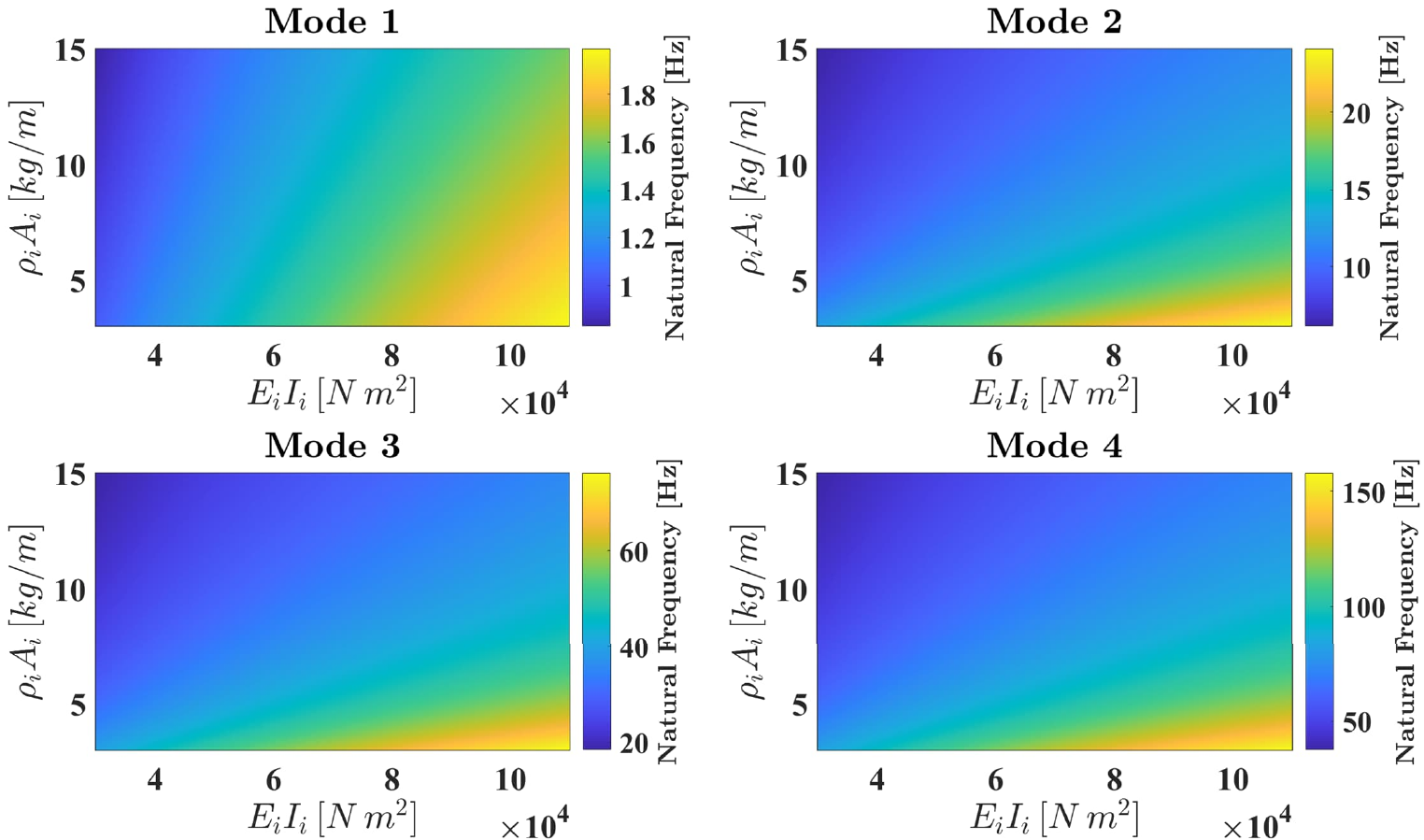}
    \caption{Variation of the first four natural frequencies with flexural rigidity $E_iI_i$ and mass per unit length $\rho_iA_i$ under the first scenario.}
    \label{fig:mode_convergence2}
\end{figure}

Fig.~\ref{fig:mode_convergence3} demonstrates the effects of $L_i$ and $M_{p_i}$ on the natural frequencies under the second scenario. 
It is observed that links with larger $L_i$ exhibit lower natural frequencies and are therefore more susceptible to excitation according to the natural frequency bandwidth criterion. 
This observation aligns with the conclusions derived from the convergence of observables criterion.

\begin{figure}[h]
    \centering
    \includegraphics[width=1\linewidth]{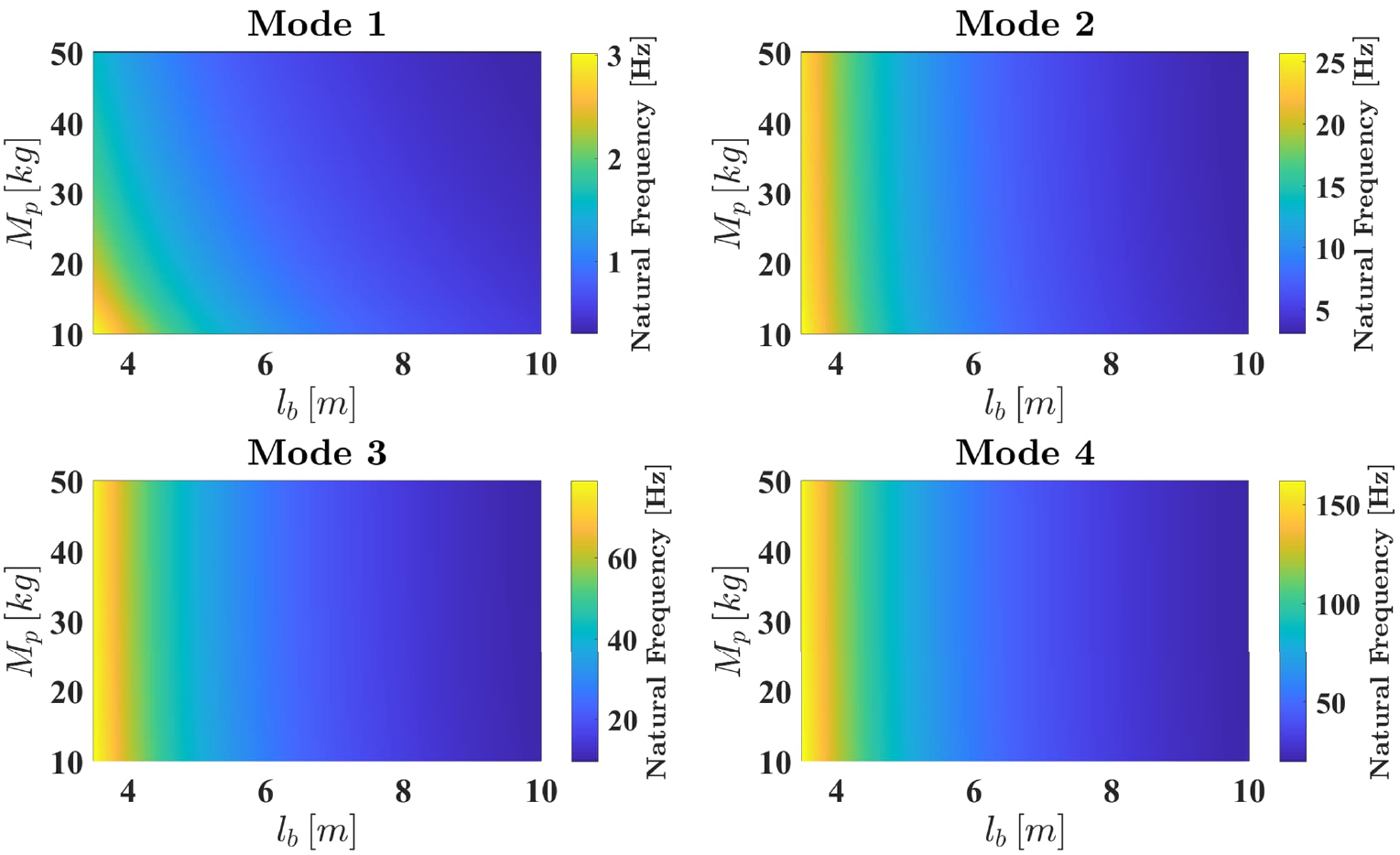}
    \caption{Variation of the first four natural frequencies with link length $L_i$ and equivalent tip mass $M_{p_i}$ under the second scenario.}
    \label{fig:mode_convergence3}
\end{figure}

\subsection{Experimental Results of Deformation Estimation}
\label{subsec:deformation_results}

Having examined the influence of different parameters on the number of dominant mode shapes, the proposed deformation estimation method introduced in Section~\ref{subsec:IMU_network} is next validated on a real flexible robotic system. Before extending the study to serial flexible manipulators, it is essential to evaluate the proposed deformation estimation method in isolation, therefore, a SLFM is employed as an experimental platform to rigorously validate the deformation estimation approach. The experimental setup is shown in Fig.~\ref{fig:SLFM_Setup}.

The flexible link is a 4.5~m-long beam fabricated from high-strength structural steel (OPTIM 700 MH Plus) with a square hollow cross-section of 60~mm $\times$ 60~mm and a wall thickness of 3~mm. Motion of the link is generated by a $\oslash$35/25--300~mm hydraulic cylinder driven by a Bosch Rexroth 4WRPEH high-response valve with a nominal bandwidth of 100~Hz. To monitor the response of the system, multiple Murata SCHA63T-K03 IMUs are distributed along the link, enabling real-time kinematic measurements. For experimental validation, the end-effector pose is measured using a Leica Absolute Laser Tracker AT960-LR, which serves as the ground-truth reference system.

All experiments were conducted on a Speedgoat real-time target machine with a sampling time of $0.001~\mathrm{s}$ (1~kHz). All system sensors were interfaced with the Speedgoat target through EtherCAT communication. 
The filter parameters were set to $k_P = 7.0407$ and $k_I = 0.1984$, and kept constant across all experiments to ensure consistent evaluation conditions~\cite{barjini2025ai}.

\begin{figure}[ht]
    \centering
    \includegraphics[width=0.6\linewidth]{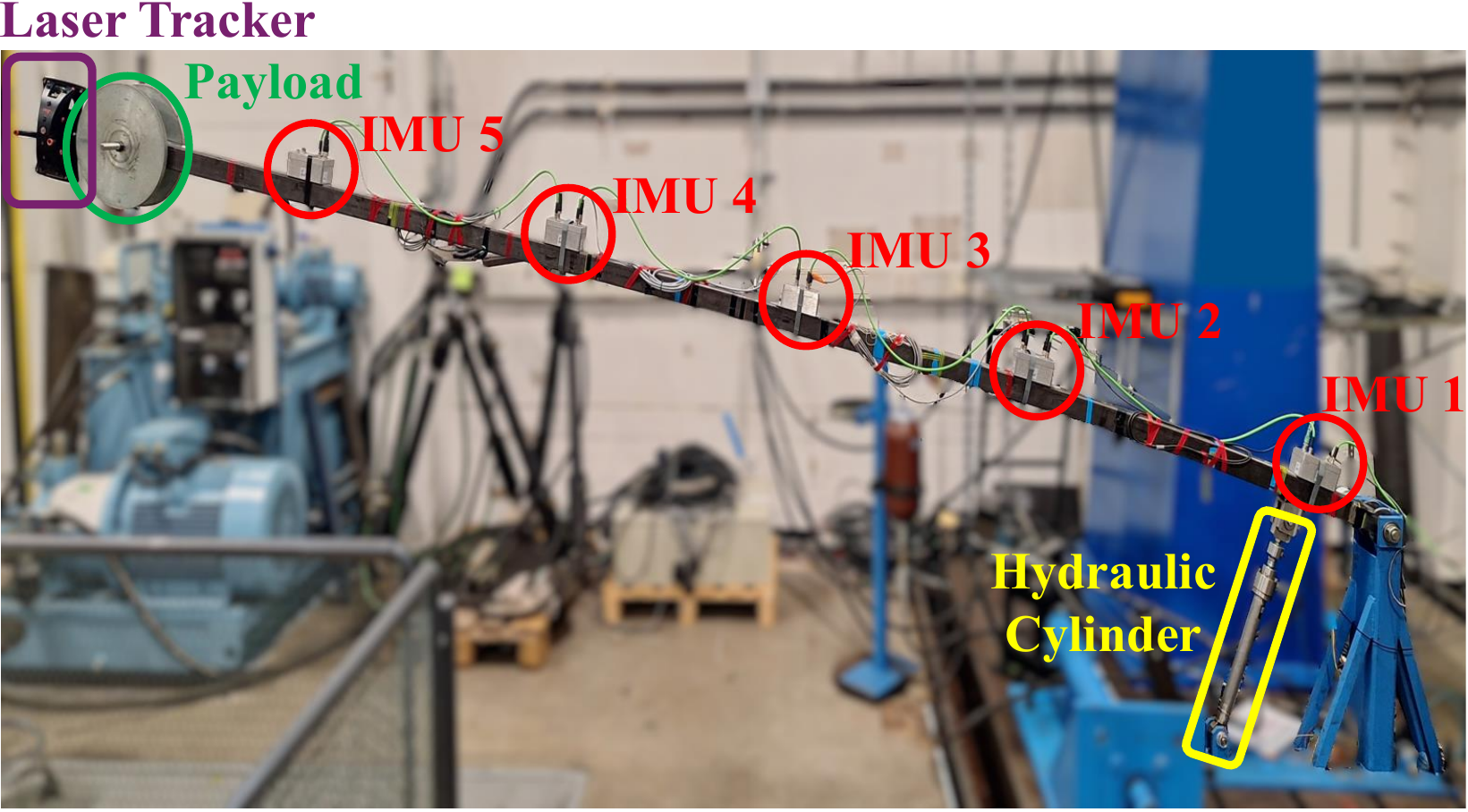}
    \caption{Experimental SLFM platform for deformation-estimation validation using distributed IMUs and laser-tracker ground truth.}
    \label{fig:SLFM_Setup}
\end{figure}

\begin{remark}
The SLFM, shown in Fig.~\ref{fig:SLFM_Setup}, has fixed material and geometric properties, characterized by a constant flexural rigidity $E I = 74300~\text{Nm}^2$, mass per unit length $\rho A = 5.37~\text{kg/m}$, and link length $L = 4.5~\text{m}$. 
In practical operation, however, the manipulator is required to handle varying payloads ($M_{p}$). Therefore, the effect of payload variation on the number of dominant modes is of particular interest. Based on the mode convergence analysis presented in Section~\ref{subsec:convergence_results} (see Fig.~\ref{fig:mode_convergence1}(b)), the system exhibits two dominant mode shapes for $M_{p} = 20~\mathrm{kg}$ and one dominant mode shape for $M_{p} = 40~\mathrm{kg}$, in simulation. 
These findings motivate further experimental validation to assess the impact of payload variations on the deformation behavior of the flexible link.
\label{remark:deformation_results}
\end{remark}

The performance of the proposed deformation estimation method is evaluated against the following approaches:
\begin{itemize}
    \item \textit{Piecewise Rigid:} A rigid multi-segment approximation consisting of five rigidly connected segments, implemented using five IMUs as described in~\cite{barjini2025ai}.
    \item \textit{1 Mode:} The proposed formulation with a single dominant mode shape, utilizing IMUs 1 and 5.
    \item \textit{2 Modes:} The proposed formulation with two dominant mode shapes, utilizing IMUs 1, 3, and 5.
    \item \textit{3 Modes:} The proposed formulation with three dominant mode shapes, utilizing IMUs 1, 2, 3, and 5.
\end{itemize}
This comparison enables a systematic investigation of the effect of increasing modal complexity on deformation estimation accuracy and overall performance.

\begin{figure}[htp]
    \centering
    \includegraphics[width=0.95\linewidth]{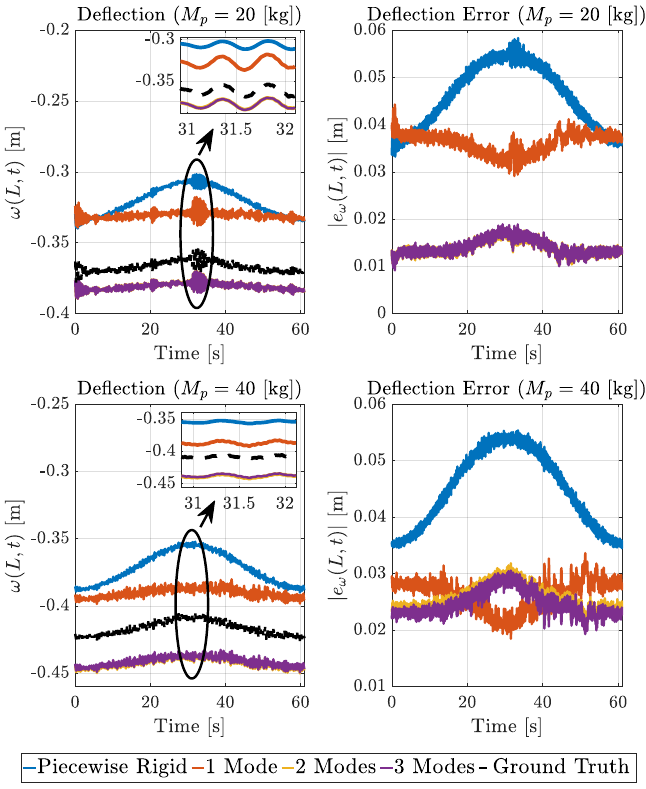}
    \caption{Experimental results of deformation estimation and corresponding estimation errors for the SLFM under 20\,[kg] and 40\,[kg] payloads.}
    \label{fig:deformation_SLFM}
\end{figure}

To experimentally validate the proposed deformation estimation framework, the SLFM was commanded to track the sinusoidal joint trajectory
\begin{equation}
    \theta_d(t)=\bar{\theta}-A\cos(\Omega_{tr} t),
    \label{eq:joint_trajectory}
\end{equation}
where $\bar{\theta}=25^\circ$ ($0.436~\mathrm{rad}$), $A=5^\circ$ ($0.087~\mathrm{rad}$), and $\Omega_{tr}=0.1~\mathrm{rad/s}$, resulting in a joint motion between $20^\circ$ and $30^\circ$. To evaluate the effect of payload variations, two payload configurations, $M_p=20~\mathrm{kg}$ and $M_p=40~\mathrm{kg}$, were considered in accordance with Remark~\ref{remark:deformation_results}. The corresponding deformation responses are presented in Fig.~\ref{fig:deformation_SLFM}.

The quantitative deformation-estimation results are summarized in Table~\ref{tab:deformation_rmse}. Consistent with Fig.~\ref{fig:deformation_SLFM}, the piecewise-rigid approximation exhibits the largest RMSE for both payload conditions despite using the largest number of IMUs, highlighting the importance of incorporating the flexible-link mode shapes into the deformation reconstruction.
For $M_p=20~\mathrm{kg}$, the 2 Modes formulation achieves the lowest RMSE of $0.014001~\mathrm{m}$, while including a third mode provides no further improvement. In contrast, for $M_p=40~\mathrm{kg}$, the 1 Mode formulation already reduces the RMSE to $0.026317~\mathrm{m}$, and the inclusion of additional modes yields only marginal improvement. These results are in agreement with Remark~\ref{remark:deformation_results}, indicating that higher-order modes become less influential as the payload increases, whereas the lighter-payload case requires additional modal content for accurate deformation reconstruction.

\begin{table}[htp]
\caption{Experimental tip-deformation RMSE for different deformation reconstruction methods.}
\label{tab:deformation_rmse}
\centering
\begin{tabular}{l|c|c}
\hline
Method & $M_p=20$ kg [m] & $M_p=40$ kg [m] \\
\hline
Piecewise Rigid & 0.046593 & 0.045521 \\
1 Mode          & 0.036284 & 0.026317 \\
2 Modes         & 0.014001 & 0.025744 \\
3 Modes         & 0.014123 & 0.024817 \\
\hline
\end{tabular}
\end{table}

\subsection{Experimental Results of Flexibility-Aware Kinematics}
\label{subsec:kinematics_results}

After validating the proposed deformation estimation method in isolation using the SLFM, its integration into the flexibility-aware forward kinematics of Section~\ref{sec:kinematics} is evaluated on a serial flexible manipulator. This experiment verifies that the estimated link deformations can be effectively incorporated into the kinematic model to improve end-effector pose estimation.

The experimental platform is a DLFM, representing a serial flexible robotic system, as shown in Fig.~\ref{fig:kinematics_Setup}. Both links have the same profile as the SLFM in Section~\ref{subsec:deformation_results}, including the flexural rigidity $EI$, and mass per unit length $\rho A$. The link lengths are $L_1=4.5~\text{m}$ and $L_2=3.55~\text{m}$. As in the SLFM experiments, a Leica Absolute Laser Tracker AT960-LR provides the ground-truth end-effector kinematics. Each link is also equipped with the same IMU network, enabling joint angle estimation and link deformation reconstruction. The joints are additionally equipped with SICK AFM60A absolute encoders with an 18-bit single-turn resolution, which are used only for validation of the IMU-based joint-angle estimates. The same Speedgoat-based EtherCAT acquisition architecture described in Section~\ref{subsec:deformation_results} was used for the DLFM experiments.

\begin{figure}[ht]
    \centering
    \includegraphics[width=1\linewidth]{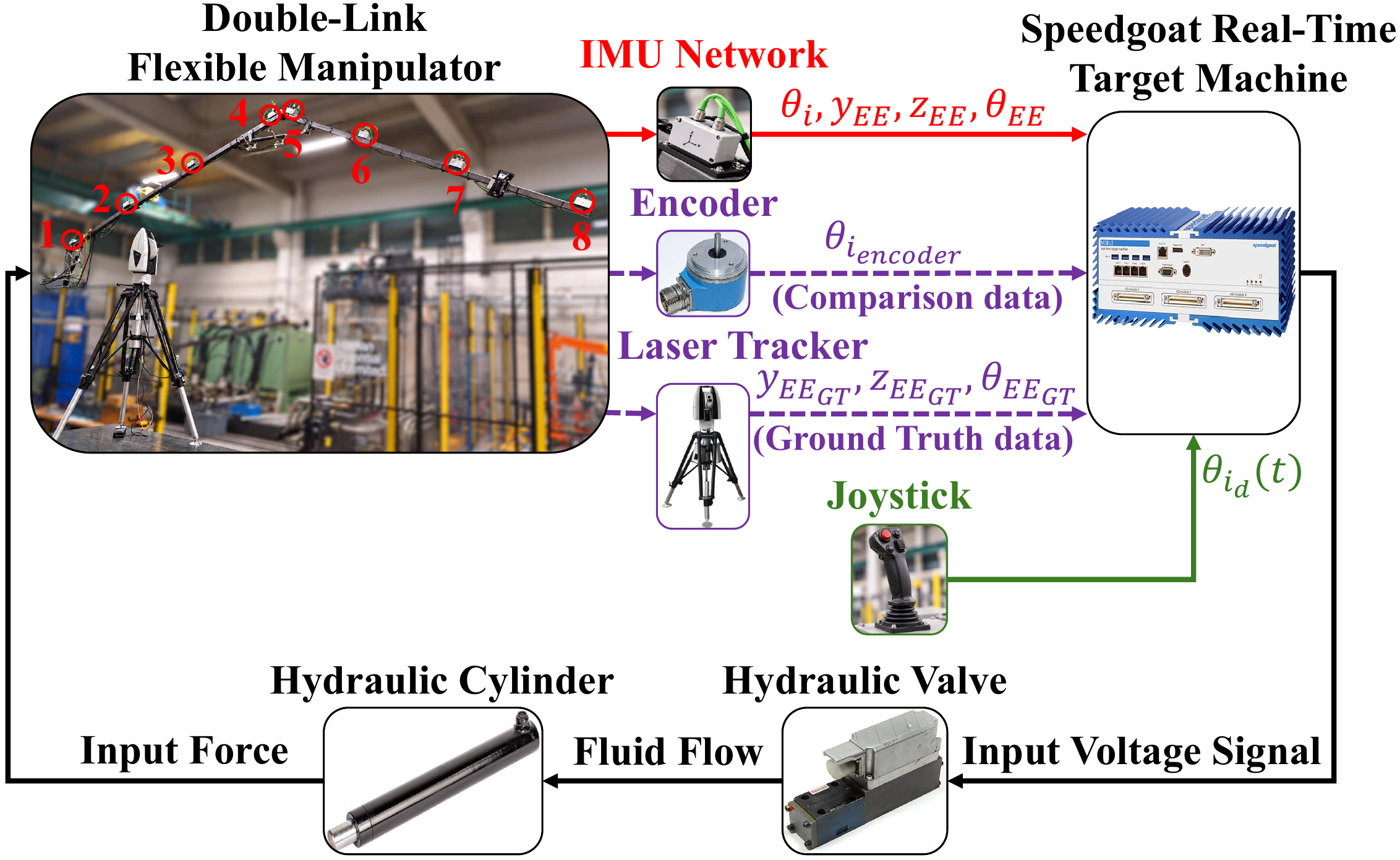}
    \caption{Experimental DLFM platform for validation of the proposed flexibility-aware forward kinematics.}
    \label{fig:kinematics_Setup}
\end{figure}

For joint angle estimation, $\theta_1$ is obtained using IMU~1 mounted on the first link, while $\theta_2$ is computed from the relative orientation between the last IMU of the first link and the first IMU of the second link, following Section~\ref{sec:preliminaries}. The estimated joint angles are compared with encoder measurements in Fig.~\ref{fig:theta_FK_DLFM}, where their close agreement confirms the reliability of the joint angle estimation method, presented in Section~\ref{sec:preliminaries}.
\begin{figure}[h]
    \centering
    \includegraphics[width=0.9\linewidth]{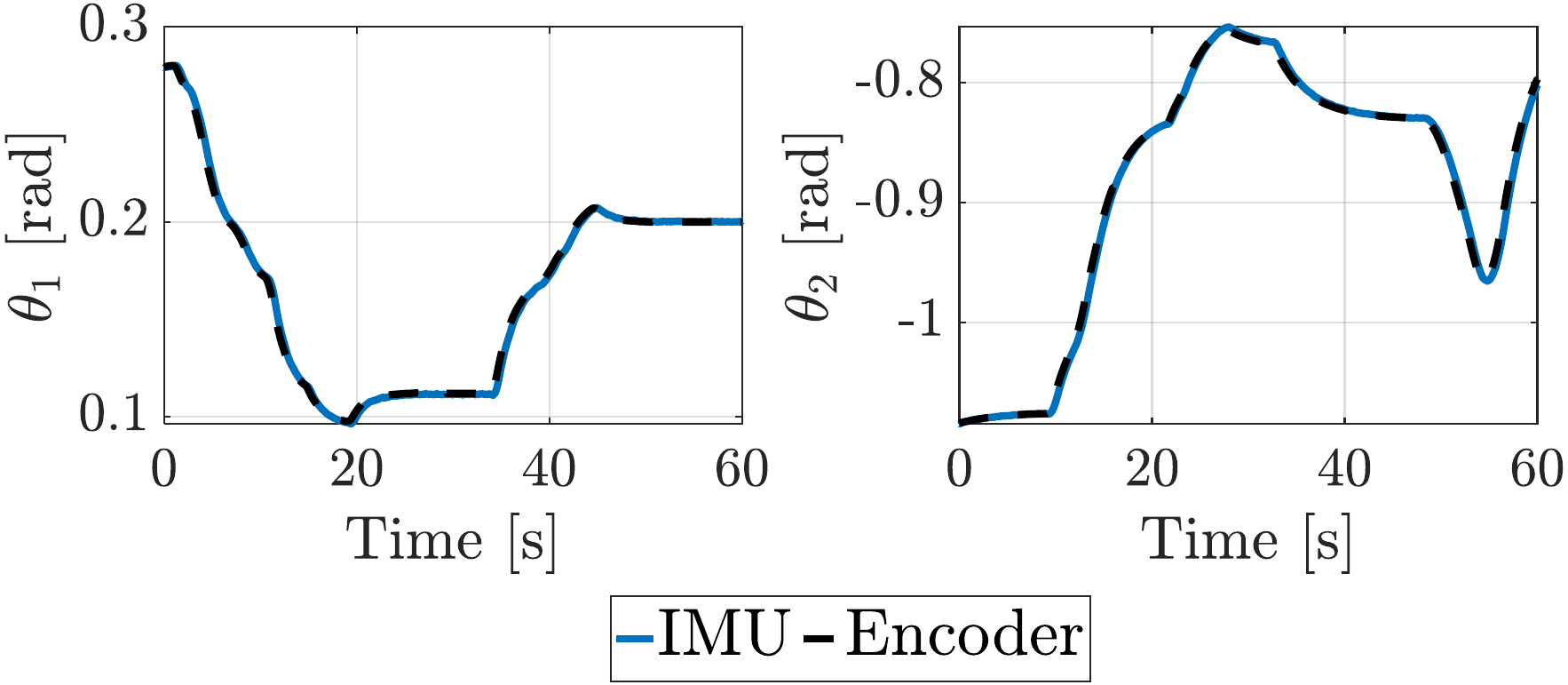}
    \caption{Experimental comparison of IMU-estimated and encoder-measured joint angles $\theta_1$ and $\theta_2$ of the DLFM.}
    \label{fig:theta_FK_DLFM}
\end{figure}

\begin{remark}
For the DLFM experiment, the number of dominant modes for each link is selected according to the mode convergence analysis in Section~\ref{subsec:convergence_results}. As shown in Fig.~\ref{fig:mode_convergence1}(b), the first link, with $L_1=4.5~\text{m}$ and carrying the second link, its actuator, and the payload with a total mass of approximately $30~\mathrm{kg}$, exhibits one dominant mode shape. In contrast, the second link, with $L_2=3.55~\text{m}$ and carrying an approximately $10~\text{kg}$ payload, exhibits two dominant mode shapes. Therefore, one mode is used for the first link and two modes are used for the second link in the proposed flexibility-aware forward kinematics implementation.
\label{remark:kinematics_results}
\end{remark}

The performance of the flexibility-aware forward kinematics is evaluated against the following approaches:
\begin{itemize}
    \item \textit{Rigid:} The links are assumed to be rigid, and no deformation is considered. The end-effector position is computed using only the IMU-estimated joint angles.
    
    \item \textit{Piecewise Rigid:} Each link is approximated by four rigidly connected segments using all eight IMUs mounted on the DLFM, as described in~\cite{barjini2025ai}.
    
   \item \textit{Proposed:} The proposed deformation estimation method is integrated with the flexibility-aware forward kinematics in Section~\ref{sec:kinematics}. Based on Remark~\ref{remark:kinematics_results}, IMUs~1 and~4 are used for the first link to capture its single dominant mode, while IMUs~5, 6, and~8 are used for the second link to capture its two dominant modes.
\end{itemize}

In this experiment, the DLFM joint angles were manually commanded using a joystick, as illustrated in Fig.~\ref{fig:theta_FK_DLFM}, and random configurations were selected to evaluate the forward kinematics over different postures. Fig.~\ref{fig:FK_DLFM} shows the estimated end-effector pose, including $y_{EE}$, $z_{EE}$, and $\theta_{EE}$, together with the corresponding absolute errors. The rigid model exhibits the largest errors because it neglects link deformation. The piecewise-rigid approach improves the estimates using all eight IMUs, but its accuracy remains limited by the rigid-segment approximation. In contrast, the proposed flexibility-aware forward kinematics achieves the lowest position and orientation errors by exploiting the dominant mode shapes.

\begin{figure}[htp]
    \centering
    \includegraphics[width=0.95\linewidth]{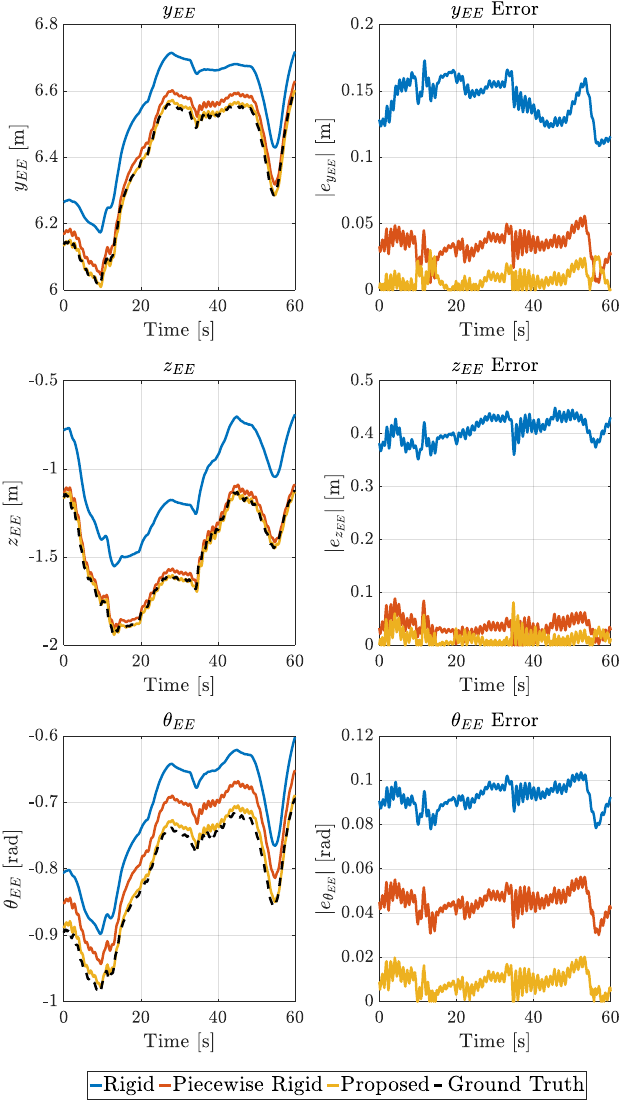}
    \caption{Experimental DLFM end-effector pose estimation and corresponding absolute errors for the rigid, piecewise-rigid, and proposed flexibility-aware models.}
    \label{fig:FK_DLFM}
\end{figure}

As summarized in Table~\ref{tab:FK_DLFM_RMSE}, the proposed method reduces the RMSEs to $0.0107~\mathrm{m}$, $0.0194~\mathrm{m}$, and $0.0102~\mathrm{rad}$ for $y_{EE}$, $z_{EE}$, and $\theta_{EE}$, respectively. Moreover, it uses only five IMUs, compared with eight for the piecewise-rigid approach, demonstrating improved accuracy and sensor efficiency.
\begin{table}[htp]
\caption{RMSE of end-effector pose estimation.}
\label{tab:FK_DLFM_RMSE}
\centering
\begin{tabular}{l|c|c|c}
\hline
Method & $y_{EE}$ RMSE & $z_{EE}$ RMSE & $\theta_{EE}$ RMSE \\
       & [m]      & [m]      & [rad] \\
\hline
Rigid           & 0.144911 & 0.407625 & 0.092640 \\
Piecewise rigid & 0.035099 & 0.037915 & 0.045337 \\
Proposed        & \textbf{0.010725} & \textbf{0.019378} & \textbf{0.010170} \\
\hline
\end{tabular}
\end{table}

\subsection{Experimental Results of Flexibility-Aware Control}
\label{subsec:control_results}

After validating the proposed deformation estimation method and its integration into flexibility-aware forward kinematics, the reconstructed deformation is incorporated into the closed-loop controller developed in Section~\ref{sec:control}. The objective of this experiment is to demonstrate the real-time use of IMU-estimated joint states and reconstructed flexible-link deformation within the proposed deformation-informed closed-loop controller on the DLFM.

The controller was implemented on the same experimental platform described in Subsection~\ref{subsec:kinematics_results}, using the IMU configuration specified in Remark~\ref{remark:kinematics_results}, as illustrated in Fig.~\ref{fig:control_Setup}. For the closed-loop control experiments, the hydraulic-cylinder chamber pressures were measured using Druck UNIK 5000 pressure transmitters with an operating pressure range of 25~MPa.

\begin{figure}[ht]
    \centering
    \includegraphics[width=0.72\linewidth]{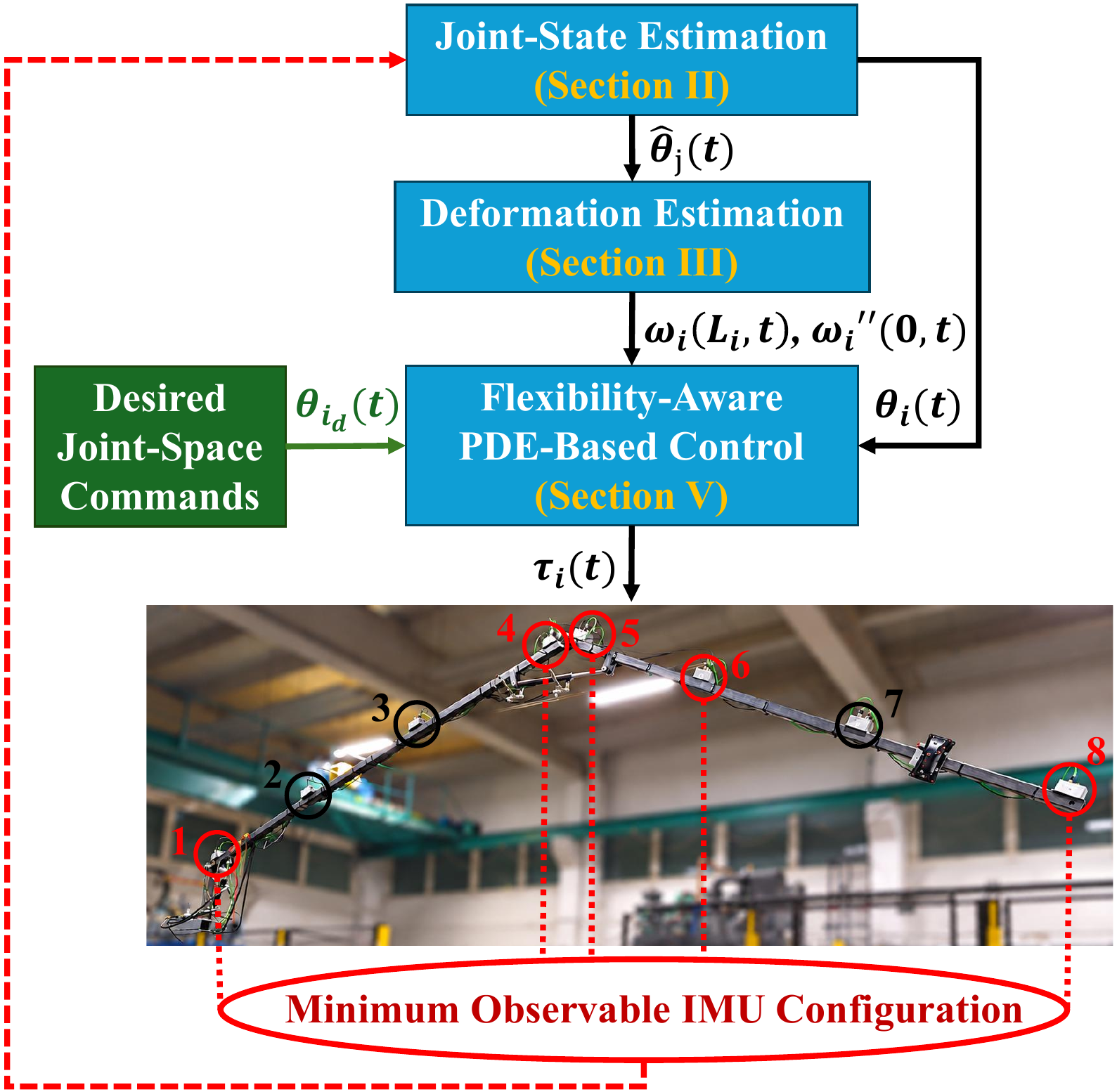}
    \caption{Overview of the proposed flexibility-aware control.}
    \label{fig:control_Setup}
\end{figure}

During operation, the proposed IMU-based framework provides the joint angles $\theta_1(t)$ and $\theta_2(t)$, the reconstructed tip deformations $\omega_1(L_1,t)$ and $\omega_2(L_2,t)$, and the corresponding base curvatures $\omega_1''(0,t)$ and $\omega_2''(0,t)$ required by the control law~\eqref{eq:tau_i}. Thus, joint-state feedback is obtained exclusively from the IMU measurements, without using encoder measurements in the closed-loop control algorithm.

The controller gains were selected as $K_{p_1}=6000$, $K_{d_1}=6000$, $K_{p_2}=12000$, and $K_{d_2}=2000$. The proposed controller computes the joint torque commands according to~\eqref{eq:tau_i}, which are subsequently applied to the DLFM through the hydraulic actuation system.

To evaluate the closed-loop tracking performance, a point-to-point reaching task was considered. The desired joint trajectories were generated using a quintic polynomial time-scaling law, ensuring continuous position, velocity, and acceleration, with zero velocity and acceleration at the beginning and end of each motion. For the $i^\text{th}$ joint, the desired trajectory over the interval $t\in[t_0,t_f]$ is defined as
\begin{equation}
\theta_{d_i}(t)
=
\theta_{i,0}
+
\left(\theta_{i,f}-\theta_{i,0}\right)
\left(
10\sigma^3-15\sigma^4+6\sigma^5
\right),
\label{eq:quintic_trajectory}
\end{equation}
where
\begin{equation}
\sigma=\frac{t-t_0}{t_f-t_0}, \qquad 0\leq\sigma\leq1,
\end{equation}
and $\theta_{i,0}$ and $\theta_{i,f}$ denote the initial and final joint configurations, respectively. The trajectory satisfies the boundary conditions
\begin{equation}
\begin{aligned}
\theta_{d_i}(t_0) &= \theta_{i,0}, &
\theta_{d_i}(t_f) &= \theta_{i,f},\\
\dot{\theta}_{d_i}(t_0) &= \dot{\theta}_{d_i}(t_f)=0, &
\ddot{\theta}_{d_i}(t_0) &= \ddot{\theta}_{d_i}(t_f)=0.
\end{aligned}
\end{equation}
The experimental results for a representative reaching trial are presented in Figs.~\ref{fig:theta_control}--\ref{fig:omega_control}.

Fig.~\ref{fig:theta_control} presents the desired joint trajectories, the corresponding IMU-estimated joint angles, and the resulting tracking errors. The estimated joint states closely follow the commanded quintic trajectories throughout the representative reaching task, with the tracking errors remaining bounded during motion and decreasing once the target configurations are reached. These results demonstrate the real-time integration of IMU-estimated joint states and reconstructed link deformation within the deformation-informed closed-loop PDE controller.

\begin{figure}[htp]
    \centering
    \includegraphics[width=0.9\linewidth]{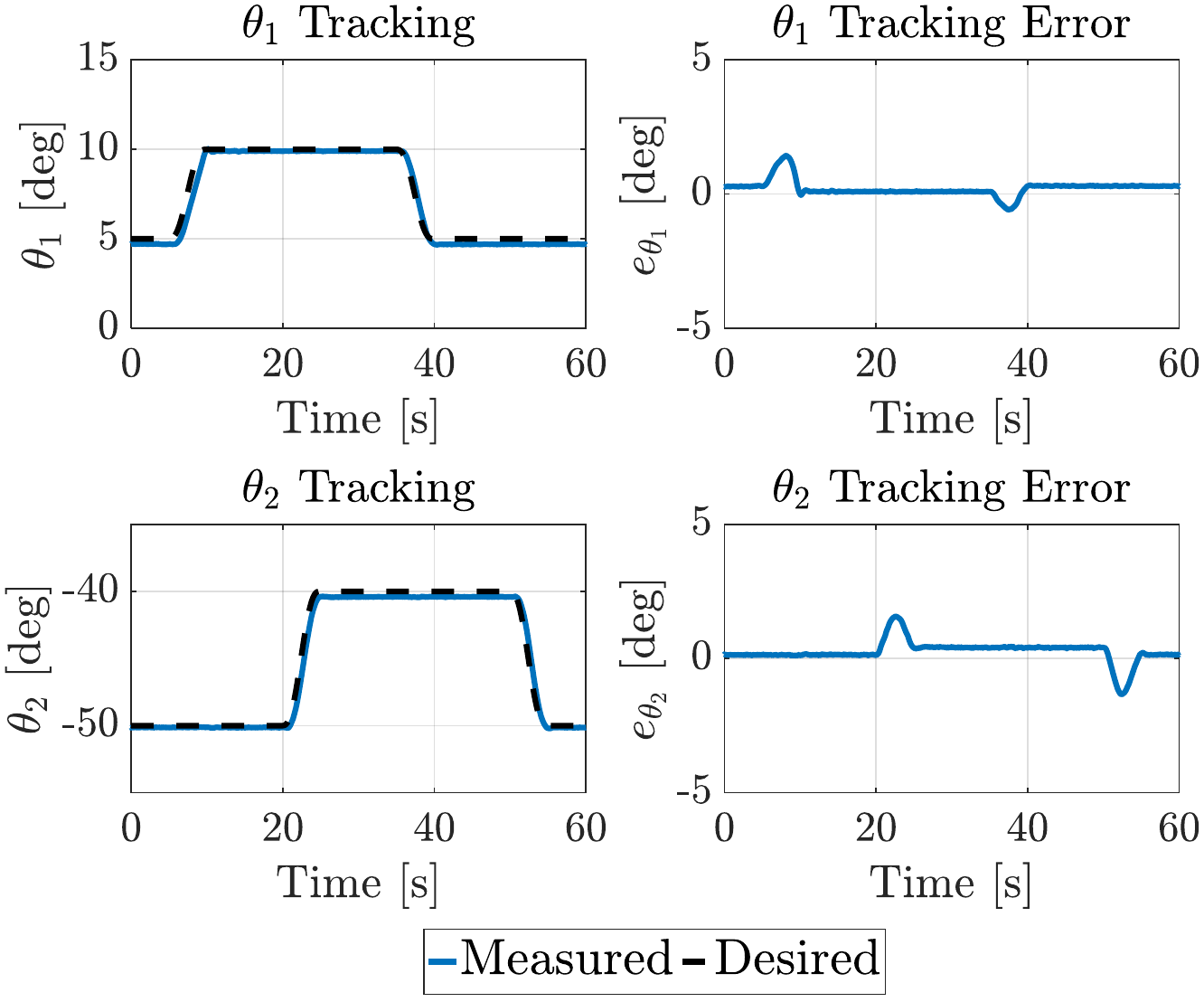}
    \caption{Experimental joint-space tracking performance of the deformation-informed PDE controller during the representative reaching task.}
    \label{fig:theta_control}
\end{figure}

The corresponding control effort is shown in Fig.~\ref{fig:tau_control}. 
The hydraulic-cylinder forces were estimated from the measured chamber pressures and the corresponding effective piston areas, and were subsequently converted into equivalent joint torques using the configuration-dependent actuator moment arms. 
The observed torque variations reflect the combined effects of joint motion, gravitational loading, and deformation-dependent dynamics during the reaching task. The experimental tracking and control-effort performance is summarized in Table~\ref{tab:control_performance}.

\begin{table}[htp]
\caption{Experimental joint tracking and control-effort performance.}
\label{tab:control_performance}
\centering
\begin{tabular}{l|c|c}
\hline
Joint & Tracking RMSE [deg] & Torque RMS [N\,m] \\
\hline
Joint 1 & 0.363 & 3179.57 \\
Joint 2 & 0.477 & 310.40 \\
\hline
\end{tabular}
\end{table}

\begin{figure}[htp]
    \centering
    \includegraphics[width=1\linewidth]{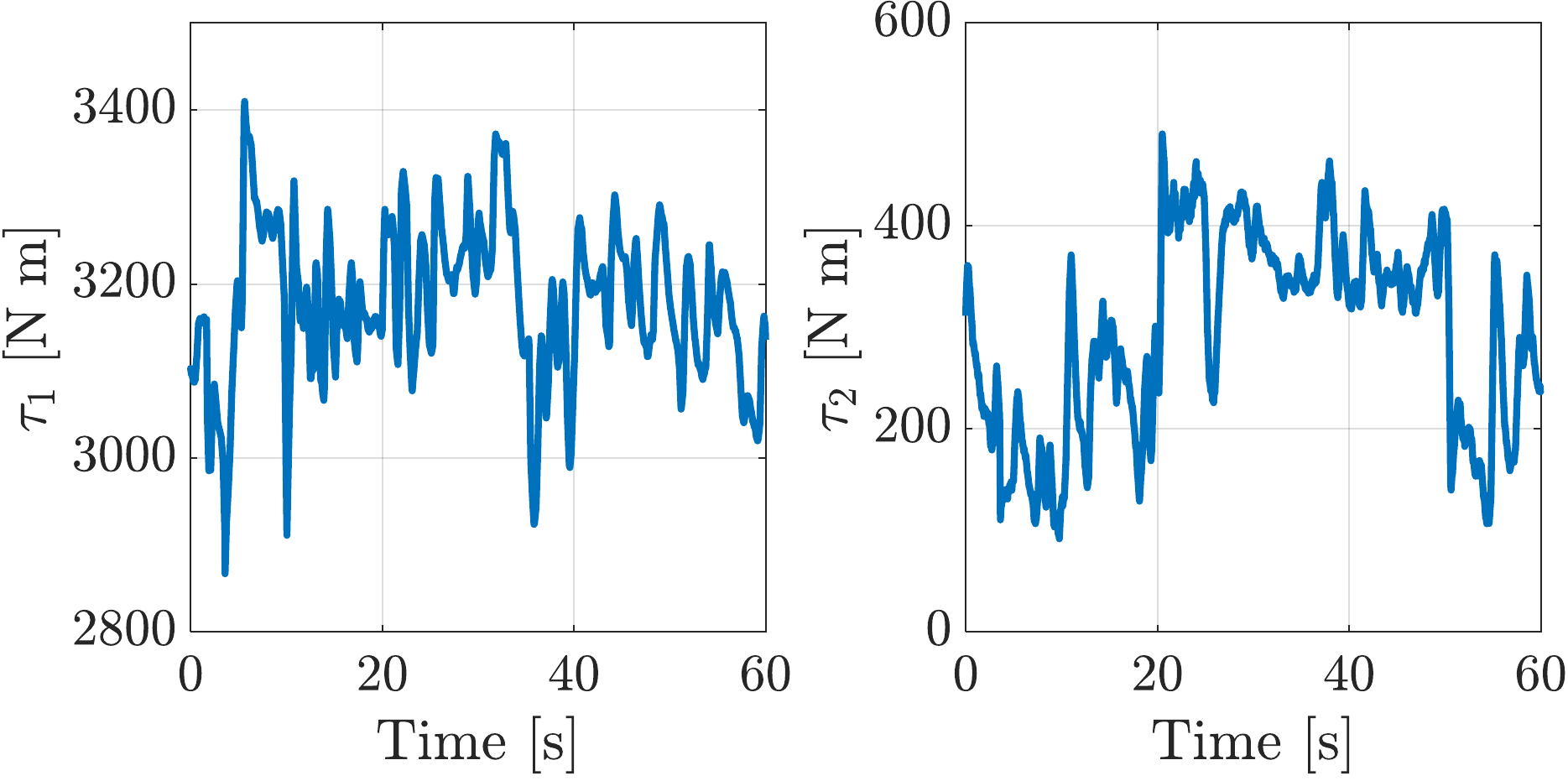}
    \caption{Experimental control effort of the proposed deformation-informed PDE controller during the reaching task.}
    \label{fig:tau_control}
\end{figure}

Fig.~\ref{fig:omega_control} presents the reconstructed tip deformations $\omega_1(L_1,t)$ and $\omega_2(L_2,t)$ during the same experiment. The deformation evolves continuously with the manipulator configuration and motion, while the oscillatory variations remain small throughout the reaching task. These results confirm that the flexible-link states required by the controller remain available online during closed-loop operation and demonstrate the real-time integration of the proposed deformation reconstruction method with the PDE-based control framework.

\begin{figure}[htp]
    \centering
    \includegraphics[width=1\linewidth]{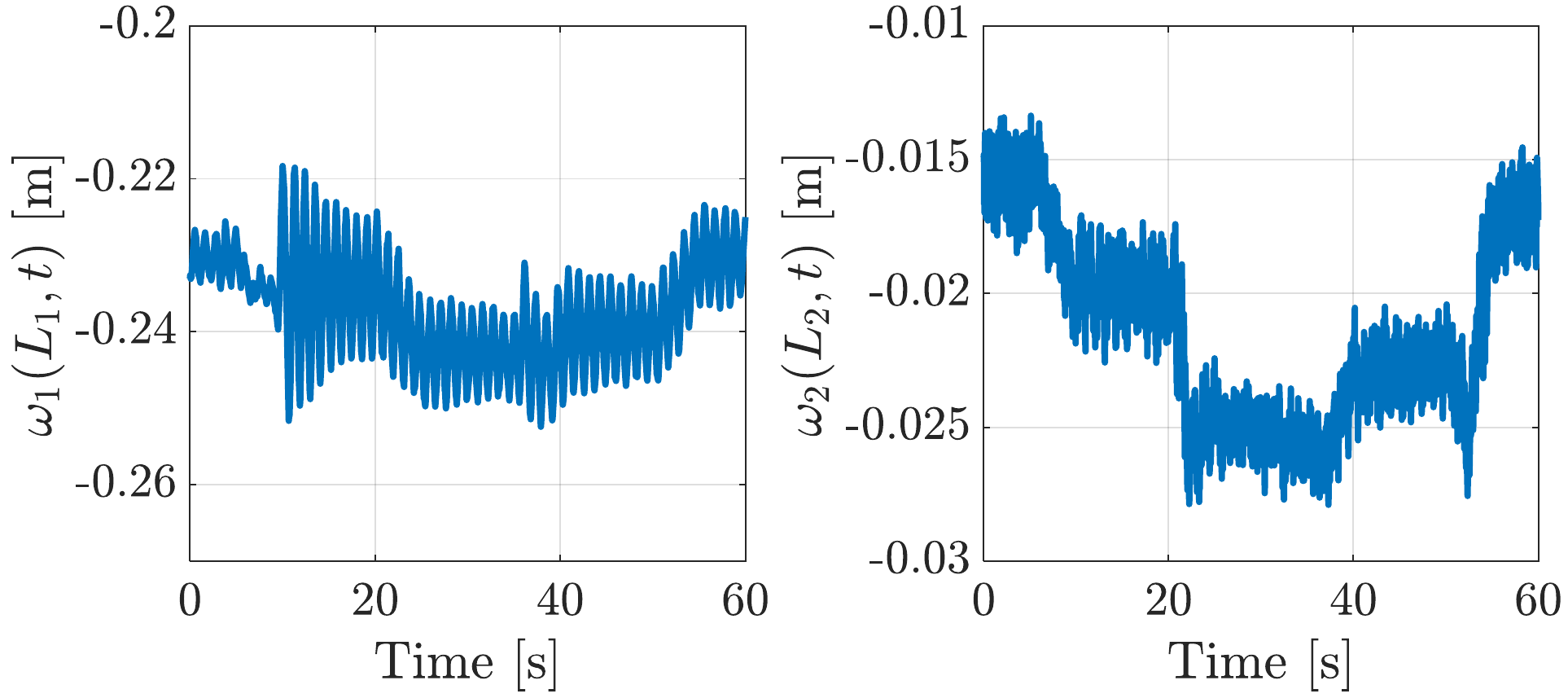}
    \caption{IMU-reconstructed flexible-link tip deformations during the representative closed-loop reaching task.}
    \label{fig:omega_control}
\end{figure}

Overall, the experimental results demonstrate the feasibility of incorporating deformation information reconstructed from the distributed IMU network into the proposed PDE-based joint-space controller in real time while maintaining accurate tracking during the considered reaching task. 
Together with the preceding deformation-estimation and kinematic experiments, these results establish the complete sensing-to-control framework from distributed IMU measurements to deformation reconstruction, flexibility-aware kinematics, and real-time closed-loop control.

\section{Conclusion}
\label{sec:conclusion}

This paper presented a unified flexibility-aware manipulation framework that connects the structural dynamics of flexible links to sensing, deformation reconstruction, kinematics, and closed-loop control. The key idea is to determine the sensing requirement from the dominant modal content of each link rather than relying on a predefined dense IMU network. 

Based on response-convergence and excitation-bandwidth criteria, the dominant modes are identified, and the corresponding modal observability condition defines the minimum observable IMU configuration. The distributed inertial measurements are then used to reconstruct the continuous link deformation and incorporate it into flexibility-aware forward kinematics and a deformation-informed PDE-based controller.

Numerical and experimental results on single- and double-link hydraulic flexible manipulators demonstrate the effectiveness of the framework. For the SLFM, the mode-informed reconstruction substantially reduced the tip-deformation error compared with the piecewise-rigid approximation. On the long-reach DLFM, the proposed kinematics achieved end-effector RMSEs of $0.0107$~m, $0.0194$~m, and $0.0102$~rad in $y_{EE}$, $z_{EE}$, and $\theta_{EE}$, respectively, while outperforming the piecewise-rigid method with fewer sensors. The same reconstructed deformation states were used online in the PDE-based controller, achieving joint-tracking RMSEs of $0.363^\circ$ and $0.477^\circ$ in the representative reaching experiment. Together, these results demonstrate a complete sensing-to-control pipeline in which distributed IMU measurements support deformation estimation, end-effector pose estimation, and real-time feedback control.

Future work will extend the framework to spatial flexible manipulators, larger deformation regimes, and online adaptation of modal content under changing payloads and operating conditions.

\bibliographystyle{IEEEtran}
\bibliography{MyReferences.bib}

\end{document}